\documentclass[11pt]{article}
\usepackage[utf8]{inputenc} 
\usepackage[T1]{fontenc}    

\usepackage{amsmath,amsthm}
\usepackage{subcaption}

\makeatletter
\let\rel@kern\relax 
\makeatother

\usepackage{newtxtext}
\usepackage{newtxmath}
\usepackage{bm}
\usepackage{natbib}

\usepackage[utf8]{inputenc} 

\usepackage{subcaption}
\usepackage{float}

\usepackage{bm}

\usepackage{graphicx}%
\usepackage{multirow}%

\usepackage{mathrsfs}%
\usepackage[title]{appendix}%
\usepackage{manyfoot}%
\usepackage{booktabs}%
\usepackage[ruled,vlined]{algorithm2e}
\usepackage{listings}%
\usepackage{mathtools}

\usepackage{rotating}
\usepackage{comment}
\usepackage{soul}

\newcommand{\mathbbm}{\mathbb}

\newcommand{\Eb}{\mathbbm{E}}
\newcommand{\Rb}{\mathbbm{R}}

\newcommand{\Pb}{\mathbbm{P}}
\newcommand{\Qb}{\mathbbm{Q}}
\newcommand{\Zb}{\mathbbm{Z}}

\newcommand{\Ac}{\mathcal{A}}
\newcommand{\Dc}{\mathcal{D}}
\newcommand{\Ec}{\mathcal{E}}

\newcommand{\Qc}{\mathcal{Q}}

\newcommand{\Bc}{\mathcal{B}}

\newcommand{\Xc}{\mathcal{X}}

\newcommand{\Rc}{\mathcal{R}}

\newcommand{\Uc}{\mathcal{U}}
\newcommand{\Tc}{\mathcal{T}}

\newcommand{\Pc}{\mathcal{P}}

\newcommand{\Vc}{\mathcal{V}}

\newcommand{\Lc}{\mathcal{L}}

\newcommand{\argmin}{\mathop{\text{argmin}}}

\renewcommand{\Omega}{\varOmega}
\newcommand{\D}{\textup{d}}
\newcommand{\Gc}{\mathcal{G}}

\newcommand{\Cdot}{\,\cdot\,}
\newcommand{\1}{\bm{1}}
\newtheorem{theorem}{Theorem}[section]
\newtheorem{proposition}[theorem]{Proposition}

\newtheorem{remark}[theorem]{Remark}

\DeclareMathOperator{\avar}{{AV\hspace{-0.1em}{\scriptstyle{@}}\hspace{-0.05em}R}}
\newcommand{\eqdef}{\mathrel{\overset{\raisebox{-0.02em}{$\scriptstyle\vartriangle$}}{\,=\,}}}

\DeclareMathOperator*{\esssup}{ess\,sup}
\newcommand{\icx}{\mathrel{\preceq_{\textup{icx}}}}
\newcommand{\Gini}{\varGamma}

\newenvironment{tightitemize}{%
    \list{{\textup{$\bullet$}}}{\settowidth\labelwidth{{\textup{\qquad}}}
    \leftmargin\labelwidth \advance\leftmargin\labelsep
    \parsep 0pt plus 1pt minus 1pt \topsep 3pt \itemsep 3pt
    }}{\endlist}

\newenvironment{tightlist}[1]{%
    \list{{\textup{(\roman{enumi})}}}{\settowidth\labelwidth{{\textup{(#1)}}}
    \leftmargin 0pt \advance\leftmargin\labelsep \itemindent \parindent
    \parsep 0pt plus 1pt minus 1pt \topsep 0pt \itemsep 0pt
    \usecounter{enumi}}}{\endlist}

\usepackage{xcolor}
\definecolor{plum}{rgb}{0.3,0,0.7}

\begin{document}

\title{Mini-Batch Risk-Averse Deep Q-Learning:\\ A Robot Navigation Case Study}


\author{ {Aayush} {Patel} and Andrzej Ruszczy\'nski \footnote{Department of Management Science and Information Systems, Rutgers University, email: ap2475@scarletmail.rutgers.edu; rusz@rutgers.edu} 
       }

\date{September 7, 2026}
%

\maketitle



\abstract{
We study the control of Markov decision processes in which the quality of a
policy is evaluated by a dynamic, time-consistent Markov risk measure rather
than by an expected discounted cost. The main obstacle to combining such
measures with reinforcement learning is that a transition risk mapping depends
on the transition kernel in a nonlinear way, and therefore cannot be estimated
from a single observed transition. We remove this obstacle by employing
mini-batch transition risk mappings: the mapping is applied to the
empirical measure of $N$ independent next-state samples, and the result is
averaged. The resulting mapping is again coherent. However, as an
expected value of a function of $N$ next-state values, it admits
an unbiased one-sample estimator. 

We embed this mapping into a double deep Q-network, analyze the two sources of
estimation bias that arise, and obtain a risk-averse Q-learning method
applicable to state spaces far beyond the reach of tabular schemes. The method
is applied to an underwater robot navigation problem, in which a vehicle must
visit collection points, gather stochastic information payloads, and deliver
them at transmission points, while exposed at each step to the risk of
destruction. A hierarchical decomposition delegates path execution to an exact
graph search and confines learning to the high-level ``collect or transmit''
decision. A low-dimensional feature map, invariant under the symmetries of
the problem, replaces the raw state--configuration encoding. In experiments on
$300$ held-out environments, the resulting policies transfer to instance sizes
never seen in training, and already $N=2$ reduces the upper semideviation of
the outcome distribution while simultaneously improving its mean whenever the
simulator is misspecified---an empirical counterpart of the duality between
coherent risk measures and distributional robustness.\\[2pt]
\emph{Keywords}: {Dynamic Risk Measures, Reinforcement Learning, Function Approximation, Robot Navigation.}
}

\section{Introduction}

We consider a Markov decision process (MDP) operating in time $t=0,1,2\dots$,  with a finite state space $\Xc=\{1,\dots,n\}$, which may be very large, and
the action space $\Uc$, which is finite as well and practically possible to enumerate.  The  {feasible control set} is  $U: \Xc\rightrightarrows \Uc$,
and the controlled transition kernel is $\Pb:\Xc\times\Uc \to \Pc(\Xc)$, where $\Pc(\Xc)$ denotes the set of probability distributions on $\Xc$.
 We use $\pi_t$ to denote the \emph{decision rule} to choose the control $u_t$ at time $t$. In general, $\pi_t$ may be a function of $(x_0,x_1,\dots,x_t)$, the states visited
at times $0,1,\dots,t$, and produce a probability distribution on $U(x_t)$, but we shall be mainly concerned with \emph{stationary deterministic Markov policies},
in which $u_t = \pi(x_t)$ with a stationary { (time-invariant)} decision rule~$\pi$.

For any stationary deterministic Markov policy $\varPi=\{\pi,\pi,\dots\}$, and any initial state $x_0$, the sequence of states $\{x_t\}_{t=0,1,\dots}$ is a Markov chain, with the
transition probability matrix $\Pb^\pi$ having rows $\Pb^\pi_{x} = \Pb(x,\pi(x))$, $x\in \Xc$. At each time $t=0,1,\dots$, if the state is $x_t$ and the control is $u_t$, a cost $c(x_t,u_t)$ is incurred, where $c:\Xc\times\Uc\to\Rb$. Thus, under a Markov policy $\varPi$, the resulting sequence of costs is
$c^\pi(x_t) = c(x_t,\pi(x_t))$, $t=0,1,\dots$.
In standard formulations (see, \emph{e.g.},
\cite{Puterman1994,bertsekas2017dynamic}), the objective is to find a control policy that minimizes or maximizes the expected (discounted) sum or the expected average of stage-wise costs {or rewards} over a finite or infinite horizon:
\begin{equation}
\label{V-expected}
v^\pi(x) = \Eb\Big\{ \sum_{t=0}^\infty \alpha^{t} c^\pi(x_t) \,\Big|\, x_0=x\Big\}, \quad \alpha \in (0,1).
\end{equation}
Instead of the standard expected-value objective, we are considering a risk-averse model with a dynamic risk measure,
\begin{equation}
\label{V-rho}
V^\pi(x) = \rho_{0,\infty}\Big(   \big\{ c^\pi(x_t)\big\}_{t=0}^{\infty}\Big), \quad x_0=x,
\end{equation}
used to evaluate the quality of the control policy. The operator $\rho_{0,\infty}(\Cdot)$ assigns 
a real number representing a risk-adjusted total cost to a sequence of random costs.
The objective is to minimize \eqref{V-rho} over the space of all feasible policies. Our choice of the minimization formulation is motivated by the convenience of dealing with risk measures and their convexity, but all our results can be easily translated to the maximization setting, with concave risk measures.

Our plan is to use mini-batch Markov risk measures, recently explored in \cite{ruszczynski2025risk} combined with Q-factor approximations, allowing for a very large state space. While \cite{ruszczynski2025risk} focused on linear value function approximations, we will explore the use of deep Q-networks to find a risk-averse control policy.

Several works introduce models of risk into reinforcement learning: exponential utility functions \cite{Borkar2001, Borkar2002,basu2008learning,fei2022cascaded} and  mean-variance models
\cite{Tamar2012, Prashanth2014}. A few later studies propose heuristic approaches involving specific coherent risk measures, such as
CVaR in the objective or constraints \cite{Chow2014,Chow2015a,ma2018risk}.
Generative model value iteration with coherent measures was analyzed by \citet{yu2018approximate}. Risk-aware Q-learning with Markov risk measures is considered by \citet{huang2017risk}.  All these methods apply to problems with a small number of state-action pairs allowing exhaustive experimentation.

Value function approximations in the context of  distributionally robust MDPs were considered  by \citet{Tamar2014}.
\citet{Tamar2017} study the policy gradient approach for Markov risk measures and use it in an actor--critic type algorithm. Both approaches are heuristic. Policy evaluation with linear architecture and Markov risk measures by a method of temporal differences was analyzed by \citet{kose2021risk}, and asymptotic convergence was proved.  The recent work of \citet{yin2022near} uses variance to control the value iteration procedure.  Refs. \cite{fei2020risk,fei2021risk} focus exclusively on the entropic risk measure, but derive theoretically sound and near-optimal regret bounds.
Recently, \citet{lamrisk} proposed a version
of a risk-aware reinforcement learning method with coherent risk measures and function approximation. However, at each iteration, it uses extensive experimentation (double sampling)  to statistically estimate the risk with high accuracy and high probability at each state-control pair.

Our contribution is threefold.
First, on the modeling side, we identify the structure of the mini-batch
transition risk mapping generated by the worst-case base mapping: it is a
distortion risk measure with a concave distortion, equivalently a spectral
measure belonging to the extended Gini family, whose Kusuoka representation is
an explicit Beta mixture of Average Values-at-Risk
(Proposition~\ref{prop:structure}). In particular, the family is increasing in
the batch size $N$, it converges to the worst-case mapping, and each of its
members is consistent with second-order stochastic dominance. The batch size
$N$ is thus not merely an algorithmic parameter; it indexes a one-parameter
family of coherent measures with a transparent economic meaning, and it is
this structure that makes a single sample sufficient for an unbiased estimate.

Second, on the algorithmic side, we combine the mini-batch mapping with a
double deep Q-network. We show that the estimation bias splits into an
\emph{action-selection} bias, which the double-network construction removes,
and a residual bias caused by the propagation of approximation noise through a
convex mapping, which has the opposite sign
(Remark~\ref{rem:state-max-bias}). The resulting scheme requires no
double sampling and no accurate pointwise risk estimation at each state-control
pair.

Third, on the computational side, we solve a risk-averse underwater robot
navigation problem whose state space is far beyond the reach of tabular
methods. Two devices prove essential: a hierarchical decomposition, which
delegates the deterministic path-execution subproblem to an exact graph search,
and a feature map invariant under the symmetry group of the problem, which
allows a single trained network to be applied to instances of a size never seen
in training.

The paper is organized as follows. Section~\ref{s:Markov_risk} recalls Markov
risk measures and risk-averse dynamic programming. Section~\ref{sec:mini-batch}
introduces the mini-batch transition risk mappings and establishes their
structure. Section~\ref{sec:ddqn} develops the risk-averse deep Q-learning
method. Sections~\ref{s:robot} and~\ref{s:results} present the robot navigation
model and the numerical results, and Section~\ref{s:conclusions} concludes.

\section{Markov Risk Measures and Risk-Averse Dynamic Programming}
\label{s:Markov_risk}

 In order to obtain a dynamic programming formulation, we focus on
\emph{Markov risk measures}, introduced in \cite{Ruszczynski2010Markov} and further developed in \cite{fan2022process,dentcheva2024risk}. 
Denote by $\Vc \eqdef \Rb^{|\Xc|}$ the space of functions of the state.  
The key role in this theory is played by a \emph{transition risk mapping}
 $\sigma:\Xc \times  \Pc(\Xc)\times \Vc\to \Rb$ which fully characterizes such a risk measure and allows for the formulation of the \emph{policy evaluation equation}:
 \begin{equation}
\label{DP-risk-infinite}
V^\pi(x)  = c(x,\pi(x))  + \alpha \sigma\big(x,\Pb(x,\pi(x)), V^\pi\big),\quad x \in \Xc.
\end{equation}

In such an MDP, a Markovian optimal policy $\pi^\star$ exists, and the corresponding optimal
value function $V^\star(\Cdot)$ satisfies the risk-averse \emph{dynamic programming equation}:
 \begin{equation}
\label{DP-risk-infinite-optimal}
\begin{aligned}
V^\star(x) &=  \min_{u\in \Uc(x)} \Big\{c(x,u) +  \alpha \sigma\big(x,\Pb(x,u), V^\star\big)\Big\},\quad
  x\in \Xc.
\end{aligned}
\end{equation}
The policy $\pi^\star$ is given by the minimizers in \eqref{DP-risk-infinite-optimal}.

The Q-factors  for a risk-averse model are defined in a way similar to the risk-neutral case:
\begin{align*}
Q^\pi(x,u)&=c(x,u)+\alpha \sigma\big(x,\Pb(x,u),V^\pi\big),\\
Q^\star(x,u)&=c(x,u)+\alpha \sigma\big(x,\Pb(x,u),V^\star\big);
\end{align*}
therefore,
$V^\pi(x)=Q^\pi(x,\pi(x))$ and
$V^\star(x)=\min_{u\in \Uc(x)} Q^\star(x,u)$. The Q-factors are elements of the space $\Qc \eqdef \Rb^{|\Xc|\times |\Uc|}$.

A simple example of a transition risk  mapping is the conditional expected value:
\begin{equation}
\label{E-form}
\sigma(x,\Pb,V) = \Pb V.
\end{equation}
Its use reduces \eqref{DP-risk-infinite}--\eqref{DP-risk-infinite-optimal} to the standard expected-value dynamic programming.

More interesting are the risk mappings that depend on the transition kernel and the value function in a nonlinear way. An
example is the \emph{mean--semideviation model} \cite{OgryczakRuszczynski1999}:
\begin{equation}
\label{msd-form}
\sigma(x,\Pb,V) = \Pb V
+ \kappa \sum_{y\in \Xc} \Pb(y)\max\big( 0, V(y) -  \Pb V\big), \quad \kappa\in [0,1].
\end{equation}
Another example, rarely used in risk-measure theory due to its conservative nature, is the \emph{worst-case} mapping:
\begin{equation}
\label{sup-form}
\sigma(x,\Pb,V) =  \inf \big\{ b\in \Rb: \Pb[V \le b ] = 1 \big\} = \esssup_{\Pb} V .
\end{equation}
All examples above, as functionals of the value function $V(\Cdot)$, with a fixed $\Pb=\Pb(x,u)$, satisfy the axioms of a coherent measure of risk \cite{ArtznerDelbaenEberEtAl1999}:
\begin{tightitemize}
\item[\textbf{Convexity:}] $\sigma(x,\Pb,\lambda V + (1-\lambda) W) \leq  \lambda \sigma(x,\Pb,V) + (1-\lambda) \sigma(x,\Pb,W)$,  $\forall\, \lambda \in [0,1]$;
\item[\textbf{Monotonicity:}] If $V \leq W$ then $\sigma(x,\Pb,V) \leq \sigma(x,\Pb,W)$;
\item[\textbf{Translation:}] $\sigma(x,\Pb,V + a\1) = \sigma(x,\Pb,V) + a$, for all $a \in \Rb$ (here, $\1$ is the vector of 1's);
\item[\textbf{Positive homogeneity:}] $\sigma(x,\Pb,s V) = s\, \sigma(x,\Pb,V)$, for all \mbox{$s \geq 0$}.
\end{tightitemize}
For such transition risk mappings, the dual representation is valid:
\begin{equation}
\label{dual}
\sigma(x,\Pb,V) = \max \Big\{  \Qb V: {\frac{\D \Qb}{\D\Pb} \in \Ac}\Big\},
\end{equation}
with some convex and closed set $\Ac$ of densities; see \cite{RuszczynskiShapiro2006a} and the comprehensive exposition in \cite[Ch. 2]{dentcheva2024risk}. The set $\Ac$ depends on $\Pb$, in general.
Formula \eqref{dual} is the precise sense in which risk aversion and
distributional robustness are two descriptions of the same object: a coherent
transition risk mapping is a worst case over an ambiguity set of kernels, which
is the model postulated in the robust dynamic programming of
\cite{iyengar2005robust,nilim2005robust}. We shall return to this reading when
interpreting the numerical results in Section~\ref{s:results}.

Monotonicity and translation equivariance imply that every coherent transition
risk mapping is nonexpansive in the supremum norm,
\begin{equation}
\label{nonexpansive}
\big| \sigma(x,\Pb,V) - \sigma(x,\Pb,W)\big| \le \|V-W\|_\infty, \quad V,W \in \Vc,
\end{equation}
so the operators on the right-hand sides of \eqref{DP-risk-infinite} and
\eqref{DP-risk-infinite-optimal} are contractions with modulus $\alpha$; this is
the reason why these equations have unique solutions and why the corresponding
value and policy iteration methods converge \cite{Ruszczynski2010Markov,ruszczynski2014erratum}.

Convex combinations of coherent measures of risk are coherent. In particular, we may mix the worst case mapping with the expected value.

\section{Mini-Batch Transition Risk Mappings}
\label{sec:mini-batch}

Because of the nonlinear dependence of the transition risk mappings on the probability measure, their statistical estimation is challenging.
To address this issue,  we adapt to our case the construction of \emph{mini-batch risk forms} from \cite{dentcheva2023mini}. For
a sample $Y^{1:N}=(Y^1,\dots,Y^N)$ with $N$ independent observations of the next state distributed according to $\Pb(x,u)$ in $\Xc$, we consider a random empirical measure
$\Pb^{(N)}(x,u)=\frac{1}{N}\sum_{j=1}^N\delta_{Y^j}$ (each $\delta_{Y^j}$ is the unit mass at $Y^j$). Then, for a fixed coherent
transition risk mapping $\sigma^{\text{b}}(x,\Pb(x,u),V)$, we consider the random mapping $\sigma^{\text{b}}(x,\Pb^{(N)}(x,u),V)$. Its expected value is a new coherent transition risk mapping
\begin{equation}
\label{mini-batch}
\sigma^{(N)}(x,\Pb(x,u),V)=\Eb_{Y^{1:N}\sim \Pb(x,u)^N}\big[\sigma^{\text{b}}(x,\Pb^{(N)}(x,u),V)\big].
\end{equation}
An essential virtue of this construction is that we can write $\sigma^{\text{b}}(x,\Pb^{(N)}(x,u),V)$ as a function of $N$ values: $V(Y^j)$, $j=1,\dots,N$, at the batch $Y^{1:N}$.

The verification that $\sigma^{(N)}(x,\Pb(x,u), \Cdot)$ is coherent is straightforward, directly from the four axioms. Evidently, if $\sigma^{\text{b}}(x,\Pb(x,u),\Cdot)$
is the conditional expectation \eqref{E-form}, the mini-batch mapping \eqref{mini-batch} is the same, due to the tower property. However, in the examples
\eqref{msd-form} and \eqref{sup-form}, the formula \eqref{mini-batch} defines  new transition risk mappings. Thanks to the small batch involved and the use of the
expectation, their statistical estimation and learning become tractable, as we shall demonstrate below.
Particularly convenient is the mini-batch counterpart of the worst-case mapping \eqref{sup-form}:
\begin{equation}
\label{batch-sup}
\sigma^{(N)}(x,\Pb(x,u),V) = \Eb_{Y^{1:N}\sim \Pb(x,u)^N}\big[ \textstyle{\max_{1 \le j \le N}} V(Y^j) \big].
\end{equation}
Even for $N=2$ it defines a nontrivial yet tractable model of risk aversion. We do not consider $N\to \infty$, but rather work with a fixed and very small $N$.

The following proposition identifies the mapping \eqref{batch-sup} within the
classical taxonomy of risk measures. It shows that the batch size $N$ is not
merely an algorithmic parameter, but indexes a one-parameter family of
distortion measures of risk, with a Kusuoka representation available in closed
form.

\begin{proposition}
\label{prop:structure}
Fix $x\in\Xc$ and $u\in \Uc(x)$, write $\Pb=\Pb(x,u)$, and regard $V \in \Vc$ as a
random variable on the probability space $(\Xc,2^{\Xc},\Pb)$, with the
distribution function $F_V$ and the quantile function $F_V^{-1}$. Then the
mapping \eqref{batch-sup} has the following properties.
\begin{tightlist}{iii}
\item[\textup{(i)}] It is a Choquet integral with respect to the distorted
probability $g_N\circ \Pb$, with the distortion function $g_N(s)=1-(1-s)^N$:
\[
\sigma^{(N)}(x,\Pb,V)=\int_0^{\infty} g_N\big(\Pb\{V>\tau\}\big)\,\D \tau
 + \int_{-\infty}^{0}\Big[ g_N\big(\Pb\{V>\tau\}\big)-1\Big]\,\D \tau .
\]
As $g_N$ is increasing and concave, with $g_N(0)=0$ and $g_N(1)=1$, the mapping
$\sigma^{(N)}(x,\Pb,\Cdot)$ is a coherent, law invariant, and comonotone
additive measure of risk.
\item[\textup{(ii)}] It is a spectral measure of risk with the nondecreasing
spectrum $\phi_N(s)=Ns^{N-1}$, and, for $N\ge 2$, its Kusuoka representation is
the Beta mixture of Average Values-at-Risk:
\[
\sigma^{(N)}(x,\Pb,V)=\int_0^1 N s^{N-1} F_V^{-1}(s)\,\D s
=\int_0^1 \avar_{\beta}(V)\,\mu_N(\D \beta),
\]
where
\[
\mu_N(\D \beta)= N(N-1)\beta^{N-2}(1-\beta)\,\D \beta,
\]
that is, $\mu_N$ is the \textup{Beta}$(N-1,2)$ distribution.
\item[\textup{(iii)}] For $N=2$,
\[
\sigma^{(2)}(x,\Pb,V)= \Eb[V] + \tfrac{1}{2}\,\Eb\big|V(Y^1)-V(Y^2)\big|
= \Eb[V] + \Gini(V),
\]
where $\Gini(\Cdot)$ is one half of Gini's mean difference. Consequently, the
mixed mapping appearing in \eqref{Q-risk} with $N=2$ is the mean--Gini
model $\Eb[V]+\varkappa\,\Gini(V)$.
\item[\textup{(iv)}] The family is nondecreasing in the batch size and exhausts
the worst case:
\[
\Pb V = \sigma^{(1)}(x,\Pb,V) \le \sigma^{(N)}(x,\Pb,V)\le \sigma^{(N+1)}(x,\Pb,V)
\le \esssup_{\Pb} V,
\]
and $\sigma^{(N)}(x,\Pb,V)\uparrow \esssup_{\Pb} V$ as $N\to\infty$.
\item[\textup{(v)}] Every member of the family is consistent with second order
stochastic dominance: if $V\icx W$, where $\icx$ denotes the increasing convex
order, which is the dominance relation appropriate for costs, then
$\sigma^{(N)}(x,\Pb,V)\le\sigma^{(N)}(x,\Pb,W)$.
\end{tightlist}
\end{proposition}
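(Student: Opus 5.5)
The whole proof rests on one observation: the distribution of $M_N=\max_{1\le j\le N}V(Y^j)$ is explicit. Since the $Y^j$ are i.i.d.\ with law $\Pb$, we have $\Pb\{M_N\le\tau\}=F_V(\tau)^N$, and hence $\Pb\{M_N>\tau\}=1-(1-\Pb\{V>\tau\})^N=g_N(\Pb\{V>\tau\})$. For a real random variable with finitely many values, the expectation equals the layer-cake formula $\Eb M_N=\int_0^\infty\Pb\{M_N>\tau\}\,\D\tau+\int_{-\infty}^0[\Pb\{M_N>\tau\}-1]\,\D\tau$. Substituting gives the Choquet representation in (i). The function $g_N$ is increasing and concave, with $g_N'(s)=N(1-s)^{N-1}$, and satisfies $g_N(0)=0$ and $g_N(1)=1$. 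Coherence, law invariance and comonotone additivity then follow from the standard theory of distortion risk measures (Denneberg, Wang; see \cite[Ch.~2]{dentcheva2024risk}). Law invariance and coherence are also immediate from \eqref{batch-sup}. For comonotone $V,W$, the maximizing index in the batch is common to both, so $\max_j(V+W)(Y^j)=\max_jV(Y^j)+\max_jW(Y^j)$ pathwise, which gives comonotone additivity directly.

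For (ii), I would use the quantile representation. Write $V=F_V^{-1}(U)$ with $U$ uniform. Then $M_N\overset{d}{=}F_V^{-1}(\max_jU_j)$, and since $\max_jU_j$ has density $Ns^{N-1}$ on $[0,1]$, we get $\sigma^{(N)}=\int_0^1Ns^{N-1}F_V^{-1}(s)\,\D s$. The spectrum $\phi_N(s)=Ns^{N-1}$ is nondecreasing and integrates to one. For the Kusuoka form, recall $\avar_\beta(V)=\frac1{1-\beta}\int_\beta^1F_V^{-1}(s)\,\D s$. By Fubini, a mixture $\int\avar_\beta\,\mu(\D\beta)$ has spectrum $\phi(s)=\int_0^s\frac{\mu(\D\beta)}{1-\beta}$. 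Differentiating, $\phi'(s)=m(s)/(1-s)$, where $m$ is the density of $\mu$. So the mixing density must be $m(\beta)=(1-\beta)\phi_N'(\beta)=N(N-1)\beta^{N-2}(1-\beta)$, which is the Beta$(N-1,2)$ density. Two checks remain: $\phi_N(0)=0$ for $N\ge2$, so there is no atom at $\beta=0$, and the density integrates to one. Both are immediate.

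For (iii), I would use $\max(a,b)=\frac{a+b}2+\frac{|a-b|}2$ and take expectations.

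For (iv), the first equality holds because $N=1$ gives $\Eb V$. Monotonicity in $N$ follows pathwise, since $\max_{j\le N}\le\max_{j\le N+1}$ on a common batch of $N+1$ samples. The upper bound holds because every $V(Y^j)\le\esssup V$ almost surely. For the limit, note that $\Xc$ is finite. Let $v^*=\esssup_\Pb V$, attained at some state $y$ with $p=\Pb\{V=v^*\}>0$. Then $\Pb\{M_N<v^*\}=(1-p)^N\to0$, and $M_N$ is bounded, so $\Eb M_N\to v^*$. Equivalently, $g_N(s)\uparrow1$ for every $s>0$, and dominated convergence applies in (i).

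For (v), I would use the characterization of increasing convex order: $V\icx W$ if and only if $\int_\beta^1F_V^{-1}\le\int_\beta^1F_W^{-1}$ for all $\beta$, that is, $\avar_\beta(V)\le\avar_\beta(W)$ for all $\beta\in[0,1)$. Integrating against $\mu_N$ from (ii) gives the claim for $N\ge2$. For $N=1$ it is the mean inequality, which is the case $\beta=0$.

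The main obstacle is keeping the Kusuoka computation honest. That means checking the Fubini exchange and the boundary behaviour at $\beta\to1$, where $1/(1-\beta)$ blows up but is cancelled by the factor $(1-\beta)$ in the density. It also means citing the $\avar$ characterization of $\icx$ correctly for finite discrete laws, where quantile functions are step functions. The remaining steps are routine.
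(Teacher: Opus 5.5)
Your proposal is correct and follows the paper's proof essentially step by step: the survival function $g_N(\Pb\{V>\tau\})$ of the batch maximum with the layer-cake formula for (i), the spectral form and the Kusuoka density $(1-\beta)\phi_N'(\beta)$ for (ii), the identity $\max(a,b)=\frac12(a+b)+\frac12|a-b|$ for (iii), and the $\avar_\beta$ mixture for (v). The departures are minor variants of the paper's steps. You obtain the spectrum through a uniform coupling rather than through the quantile function $F_V^{-1}(s^{1/N})$ of the maximum. You prove monotonicity in $N$ and comonotone additivity pathwise; for the latter, to handle ties, choose among the maximizers of $V$ one that also maximizes $W$. You use the finiteness of $\Xc$ for the limit in (iv). You also treat the case $N=1$ in (v) explicitly, which the paper leaves implicit.
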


\begin{proof}
(i) For every $\tau\in\Rb$, by the independence of $Y^1,\dots,Y^N$,
\[
\Pb\Big\{ \max_{1\le j\le N} V(Y^j) > \tau \Big\}
= 1 - \big( 1 - \Pb\{V>\tau\}\big)^N = g_N\big(\Pb\{V>\tau\}\big),
\]
and the displayed formula is the representation of the expected
value of $\max_{1\le j\le N} V(Y^j)$. Moreover, $g_N'(s)=N(1-s)^{N-1}> 0$ and
$g_N''(s)=-N(N-1)(1-s)^{N-2}\le 0$ on $[0,1)$. Coherence, law invariance, and
comonotone additivity of Choquet integrals with concave distortion functions
are classical; see \cite[Ch.~2]{dentcheva2024risk} and \cite{follmer2006convex}.

(ii) We use the spectral representation of law invariant comonotone additive
coherent measures of risk \cite{kusuoka2001law,acerbi2002spectral}.
The distribution function of $\max_{1\le j\le N}V(Y^j)$ is $F_V^N$, and its
quantile function is $s\mapsto F_V^{-1}(s^{1/N})$. Hence
$\sigma^{(N)}(x,\Pb,V)=\int_0^1 F_V^{-1}(s^{1/N})\,\D s$, and the substitution
$s = v^N$ yields the spectral form. Since $\avar_\beta$ is the spectral measure
with the spectrum $u\mapsto \1_{[\beta,1]}(s)/(1-\beta)$, a mixture
$\int_0^1 \avar_\beta(\Cdot)\,\mu(\D\beta)$ has the spectrum
$\phi(s)=\int_{[0,s]}(1-\beta)^{-1}\mu(\D\beta)$; therefore
$\mu(\D\beta)=(1-\beta)\phi'(\beta)\,\D\beta$, with no atom at $0$ because
$\phi_N(0)=0$ for $N\ge 2$. Substituting $\phi_N'(\beta)=N(N-1)\beta^{N-2}$
gives $\mu_N$, whose total mass is $N(N-1)B(N-1,2)=1$.

(iii) Immediate from $\max(a,b)=\frac{1}{2}(a+b)+\frac{1}{2}|a-b|$ and the fact
that $Y^1$ and $Y^2$ are independent copies.

(iv) We have $g_N\le g_{N+1}$ pointwise on $[0,1]$, and the Choquet integral is
monotone with respect to the distortion function; the limit follows from
$\max_{1\le j\le N}V(Y^j)\uparrow \esssup_\Pb V$ almost surely and monotone
convergence.

(v) For $N>1$, by (ii), $\sigma^{(N)}$ is a mixture of the mappings $\avar_\beta$, each of
which is consistent with the increasing convex order
\cite{OgryczakRuszczynski2002,dentcheva2024risk}.
\end{proof}

Part (iii) deserves emphasis. Combining \eqref{batch-sup} with the conditional
expectation with the weight $\varkappa\in[0,1]$, as we do in \eqref{Q-risk}
below, produces for $N=2$ the mean--Gini model, one of the oldest mean--risk
models known to be consistent with second order dominance
\cite{yitzhaki1982stochastic,OgryczakRuszczynski2002}, and the admissible
range $\varkappa \in [0,1]$ is exactly the range in which that consistency was
established. The choice $N=2$, which we make in all our experiments, is
therefore not a computational compromise: it delivers a classical and
well-understood model of risk at the cost of two next-state samples per
transition. 

We also obtain the Q-factors:
\begin{equation}
\label{Q-risk}
\begin{aligned}
Q^\pi(x,u) &= c(x,u) + \alpha\,\Eb_{Y^{1:N}\sim \Pb(x,u)^N} \Big[ (1-\varkappa) \frac{1}{N} \sum_{j=1}^N V^\pi(Y^j)
+ \varkappa {\max_{1 \le j \le N}} V^\pi(Y^j)\Big],\\
Q^\star(x,u) &= c(x,u) + \alpha\,\Eb_{Y^{1:N}\sim \Pb(x,u)^N} \Big[ (1-\varkappa) \frac{1}{N} \sum_{j=1}^N V^\star(Y^j)
+ \varkappa {\max_{1 \le j \le N}} V^\star(Y^j)\Big].
\end{aligned}
\end{equation}
Being expectations, they admit simple unbiased estimators:
\begin{align*}
\widetilde{Q}^\pi(x,u) &= c(x,u) +  \alpha\Big[(1-\varkappa) \frac{1}{N} \sum_{j=1}^N V^\pi(Y^j)
+ \varkappa {\max_{1 \le j \le N}} V^\pi(Y^j)\Big],\\
\widetilde{Q}^\star(x,u) &= c(x,u) +  \alpha\Big[(1-\varkappa) \frac{1}{N} \sum_{j=1}^N V^\star(Y^j)
+ \varkappa {\max_{1 \le j \le N}} V^\star(Y^j)\Big].
\end{align*}
Here and below, $\varkappa$ is the weight of the worst-case component. Both estimators are unbiased, conditionally on $(x,u)$,
and each of them uses one single batch of $N$ next states. This is the decisive
computational advantage of the construction \eqref{mini-batch}: no
double sampling and no pointwise estimation of a nonlinear functional of the
transition kernel are needed.

\section{Risk-Averse Q-Learning with Double Deep Q-Networks}
\label{sec:ddqn}

As in the expected-value case, the direct application of the Q-factors \eqref{Q-risk} is limited to small state and action spaces. In our application to be discussed in the next sections the tabular approach is out of the question, and
we resort to functional approximations within a pre-specified parametric function class. 

Linear value function approximation approaches to MDPs have a long history; see
\cite{bradtke1996linear,sutton1988learning,tsitsiklis1997analysis,Sutton1998,melo2007q}
and many references therein. The concept of a linear (mixture) MDP \cite{bradtke1996linear,melo2007q}, originating from the theory of linear bandits \cite{bubeck2012regret,lattimore2020bandit},  is pertinent in this setting.
It has been recently used by
\cite{ayoub2020model,yang2020reinforcement,jin2020provably,modi2020sample} to develop complexity bounds for expected value RL algorithms.
The generalization to bilinear models by \cite{du2021bilinear} extends the applicability of this class. The state aggregation or lumping approaches of \cite{dong2019provably,katehakis2012successive} are also related to this model.

In \cite{ruszczynski2025risk}, we used a linear functional model to approximate the mini-batch policy value function \eqref{DP-risk-infinite} within a policy iteration method, and we proved statistical error bounds of
these approximations. Now, we intend to adapt the techniques of deep neural networks within a Q-learning scheme.

We approximate the optimal Q-factors $Q^\star(x,u)$ by a neural
network $Q_\theta:\Xc\times\Uc\to\Rb$ with parameters $\theta\in\Rb^d$, and
introduce a second \emph{target network} $Q_{\theta^-}$ with parameters
$\theta^-$, periodically synchronized with $\theta$.  The rationale for
maintaining two networks, the \emph{Double Deep Q-Network} (DDQN) architecture of
\citet{van2016deep}, stems from the systematic bias inherent in single-network
Q-learning. Using the same parameter vector for both action selection and action
evaluation, and assuming that $Q_\theta(y,u')$ is an unbiased estimator of
$Q^\star(y,u')$ for each fixed $u'$, Jensen's inequality applied to the concave
function $\min$ yields
\[
    \Eb \bigl[\min_{u'\in\Uc(y)} Q_\theta(y,u')\bigr]
    \;\le\;
    \min_{u'\in\Uc(y)} \Eb\bigl[ Q_\theta(y,u')\bigr]
    \;=\;
    \min_{u'\in\Uc(y)} Q^\star(y,u').
\]
In the maximization convention of \citet{van2016deep} this is the familiar
overestimation bias; in our minimization convention it appears as a systematic
\emph{underestimation} of the optimal cost-to-go, that is, as unwarranted
optimism. Its effect is unfavorable in either convention: the target
is biased towards the appearance of a better policy than the one actually
available.

In the risk-averse setting a second and distinct source of bias is present. The
sample estimator $\widetilde{Q}^\star$ of Section~\ref{sec:mini-batch} involves
two optimizations: (i) action selection via $V^\star(y)=\min_{u'}Q^\star(y,u')$,
and (ii) the worst-case term $\max_{1\le j\le N}V^\star(Y^j)$ of the mini-batch
risk mapping~\eqref{batch-sup}.  The DDQN prescription addresses (i) directly,
as we now describe; source~(ii) is of a different nature and is analyzed in
Remark~\ref{rem:state-max-bias} below.

Recall that the optimal value function satisfies
$V^\star(y)=\min_{u'\in\Uc(y)}Q^\star(y,u')$.  The DDQN estimator of this
quantity decouples selection from evaluation:
\begin{equation}
    \label{V-ddqn}
    V^{\mathrm{dq}}(y;\theta,\theta^-)
    \;=\;
    Q_{\theta^-}\!\Bigl(y,\;\argmin_{u'\in\Uc(y)} Q_\theta(y,u')\Bigr).
\end{equation}
The \emph{online} network $\theta$ identifies the greedy action; the
\emph{target} network $\theta^-$ evaluates its worth.  This separation removes
the correlation between the selected action and its estimated value
that drives overestimation.

Given a mini-batch of next states $Y^{1:N}{\sim}\Pb(x,u)^N$,
we substitute $V^{\mathrm{dq}}$ for $V^\star$ in the sample estimator
$\widetilde{Q}^\star$, obtaining the risk-averse DDQN Bellman target
\begin{equation}
    \label{ddqn-target}
    \Tc(x,u,Y^{1:N};\theta,\theta^-)
    \;=\;
    c(x,u)
    + \alpha\biggl[
        (1-\varkappa)\frac{1}{N}\sum_{j=1}^{N} V^{\mathrm{dq}}(Y^j;\theta,\theta^-)
        + \varkappa\max_{1\le j\le N} V^{\mathrm{dq}}(Y^j;\theta,\theta^-)
    \biggr].
\end{equation}
Here $\alpha\in(0,1]$ is the discount factor.  Taking expectations over
$Y^{1:N}$ in~\eqref{ddqn-target} recovers the mini-batch Q-factor associated with the
mapping~\eqref{mini-batch}, confirming consistency of the target with the
coherent risk structure established in Section~\ref{sec:mini-batch}.

\begin{remark}
    \label{rem:state-max-bias}
    {\rm 
    The term $\max_{1\le j\le N}V^{\mathrm{dq}}(Y^j)$ maximizes over realized
    \emph{next states}, and not over competing action-value estimates.
    Conditionally on $(\theta,\theta^-)$, the function
    $y\mapsto V^{\mathrm{dq}}(y;\theta,\theta^-)$ is deterministic, and
    therefore
    \[
        \Eb_{Y^{1:N}\sim\Pb(x,u)^N}\Big[\max_{1\le j\le N}
        V^{\mathrm{dq}}(Y^j;\theta,\theta^-)\Big]
        \;=\;
        \sigma^{(N)}\big(x,\Pb(x,u),V^{\mathrm{dq}}\big)
    \]
    holds \emph{exactly}. The batch maximum is thus an unbiased estimator of the
    mini-batch risk mapping evaluated at the current approximation. This property, established in Section~\ref{sec:mini-batch}, 
    makes the whole scheme implementable.

    What remains is the propagation of the approximation error
    $V^{\mathrm{dq}}-V^\star$ through $\sigma^{(N)}$. By \eqref{nonexpansive},
    this error is not amplified,
    \[
        \big|\sigma^{(N)}(x,\Pb,V^{\mathrm{dq}})-\sigma^{(N)}(x,\Pb,V^\star)\big|
        \le \|V^{\mathrm{dq}}-V^\star\|_\infty .
    \]
    Its expectation with respect to the randomness of training, denoted
    $\Eb_\theta$, does not vanish, however: since $\sigma^{(N)}(x,\Pb,\Cdot)$ is
    convex, Jensen's inequality gives
    \[
        \Eb_\theta\big[\sigma^{(N)}(x,\Pb,V^{\mathrm{dq}})\big]
        \;\ge\;
        \sigma^{(N)}\big(x,\Pb,\Eb_\theta[V^{\mathrm{dq}}]\big),
    \]
    so that zero-mean noise in the value estimates contributes a nonnegative,
    that is, \emph{pessimistic} bias proportional to $\varkappa$. It is worth stressing that this bias
    has the sign \emph{opposite} to the optimism of the single-network target
    discussed above. 
    }
\end{remark}

Because the target~\eqref{ddqn-target} requires $N$ next-state samples
per state-action pair, experience tuples stored in the replay buffer
$\Dc$ take the form
\[
    e = \bigl(x,\,u,\,c,\,Y^{1:N}\bigr),
    \quad
    Y^{1:N}\sim\Pb(x,u)^N.
\]
In simulation-based settings the $N$ samples are drawn directly from the
transition kernel. 

Given a training mini-batch $\Bc\subset\Dc$ of $K$ experience tuples, the
empirical squared-error loss is
\begin{equation}
    \label{ddqn-loss}
    \Lc(\theta;\theta^-)
    \;=\;
    \frac{1}{K}
    \sum_{(x,u,c,Y^{1:N})\in\Bc}
    \Bigl[
        \Tc(x,u,Y^{1:N};\theta,\theta^-) - Q_\theta(x,u)
    \Bigr]^2.
\end{equation}
The gradient $\nabla_\theta\Lc$ is computed treating $\theta^-$ as a constant,
so the Bellman target~\eqref{ddqn-target} does not propagate gradients through
the target network.  This ``stop-gradient'' convention, standard since
\cite{mnih2015human}, ensures the regression target remains stationary across
each gradient step, stabilizing training.

\begin{algorithm}[tb]
\caption{Risk-Averse DDQN with Mini-Batch Transition Risk}
\label{alg:ra-ddqn}
\KwIn{Mini-batch size $N$, replay buffer capacity, training batch size $K$,
      target update period $\tau$, risk weight $\varkappa\in[0,1]$,
      discount $\alpha$, learning rate $\beta$, exploration schedule $\{\varepsilon_t\}$}
Initialize online network $Q_\theta$; set $\theta^-\leftarrow\theta$\;
Initialize replay buffer $\Dc\leftarrow\emptyset$\;
\For{each episode}{
    Observe initial state $x_0$\;
    \For{each step $t=0,1,2,\dots$}{
        With prob.\ $\varepsilon_t$ select $u_t$ at random;
        otherwise $u_t\leftarrow\argmin_{u}Q_\theta(x_t,u)$\;
        Execute $u_t$; observe cost $c_t$;
        draw $Y_t^{1:N}\sim\Pb(x_t,u_t)^N$\;
        Store $(x_t,u_t,c_t,Y_t^{1:N})$ in $\Dc$\;
        Sample training batch $\Bc=\{(x_i,u_i,c_i,Y_i^{1:N})\}_{i=1}^K$ from $\Dc$\;
        \For{$i=1,\dots,K$}{
            $u_{ij}^\star \leftarrow \argmin_{u'\in\Uc(Y_i^j)}Q_\theta(Y_i^j,u')$,
            \quad $j=1,\dots,N$\;
            $V_{ij}^{\mathrm{dq}} \leftarrow Q_{\theta^-}(Y_i^j,u_{ij}^\star)$,
            \quad $j=1,\dots,N$\;
            $\Tc_i \leftarrow c_i + \alpha\bigl[
                (1-\varkappa)\tfrac{1}{N}\textstyle\sum_{j} V_{ij}^{\mathrm{dq}}
                +\varkappa\max_{j} V_{ij}^{\mathrm{dq}}\bigr]$\;
        }
        $\Lc(\theta;\theta^-) \leftarrow \frac{1}{K}\sum_{i=1}^{K}
            \bigl[\Tc_i - Q_\theta(x_i,u_i)\bigr]^2$
            \quad ($\Tc_i$ treated as a constant)\;
        $\theta \leftarrow \theta
            - \beta\,\nabla_\theta\Lc(\theta;\theta^-)$\;
        \If{$t \bmod \tau = 0$}{
            $\theta^-\leftarrow\theta$ \quad(hard update)
            \textbf{ or }\quad
            $\theta^-\leftarrow(1-\rho)\,\theta^-+\rho\,\theta$
            \quad(soft update)\;
        }
    }
}
\end{algorithm}

Setting $\varkappa=0$ reduces~\eqref{ddqn-target} to the standard DDQN
target with $N$ averaged next states, which collapses to the classical
DDQN of \citet{van2016deep} when $N=1$.  Setting $\varkappa=1$ and letting
$N\to\infty$ recovers a purely worst-case, that is, robust target, by
Proposition~\ref{prop:structure}(iv).  The pair $(\varkappa,N)$ thus provides
two complementary dials: $\varkappa$ interpolates linearly between risk-neutral
expected-cost minimization and the extended Gini mean of order $N$, while $N$
controls how far that mean is displaced towards the upper tail. Both preserve
the coherence and the dominance consistency established in
Section~\ref{sec:mini-batch}, and the corresponding Bellman operator remains an
$\alpha$-contraction by \eqref{nonexpansive}.

Algorithm \ref{alg:ra-ddqn} presents the outline of our idea.

\section{Risk-Aware Underwater Robot Navigation}
\label{s:robot}

In this and the following sections we consider  the underwater robot navigation problem discussed
in \cite{ruszczynski2025risk}. We formulate it
as a Markov decision
process and apply the mini-batch transition risk framework developed in
the preceding sections to obtain a risk-averse control policy.

\subsection{Problem Overview}

The robot operates in a hazardous underwater environment. Its task is to visit a
set of \textit{collection points}, gather stochastic information payloads, and
transmit them at designated \textit{transmission points}, while avoiding known
obstacle zones. The robot is subject to \textit{systematic risk}: at each time
step it is destroyed with probability $\delta\in(0,1)$, independently of its
action. The objective is to minimize the cumulative risk of information loss
while completing the mission efficiently.
Throughout, $\delta$ denotes this destruction probability and $\alpha$ the
discount factor of Sections~\ref{s:Markov_risk}--\ref{sec:ddqn}.


The environment is modeled as a closed, connected rectangular grid
$\Gc\subset\Zb^2$. The following disjoint subsets of $\Gc$ are fixed
throughout an episode:
\begin{itemize}
    \item \textit{Collection points} $\mathcal{C}\subset\Gc$: locations where
    information is gathered.
    \item \textit{Transmission points} $\Rc\subset\Gc$: locations where
    carried information is offloaded.
    \item \textit{Obstacles} $\mathcal{O}\subset\Gc$: locations the robot must
    not enter; $\mathcal{C}$, $\Rc$, and $\mathcal{O}$ are pairwise
    disjoint.
\end{itemize}
They constitute the \textit{configuration} $\Ec$ of the problem.

At each collection point, a visit yields high-value information $I_H$ with
probability $1-p$ and low-value information $I_L < I_H$ with probability $p$.
Formally, the collected payload is the random variable
\[
    I_{\text{obs}} =
    \begin{cases}
        I_L, & \text{with probability } p,\\
        I_H, & \text{with probability } 1-p.
    \end{cases}
\]
The set of attainable accumulated information payloads is $\mathcal{I}\subset\Rb_+$.

\paragraph{Action space.}
The robot's action space $\mathcal{U} = \mathcal{M}\cup\{a_c, a_t\}$ consists
of three components:
\begin{itemize}
    \item \textbf{Movement} ($\mathcal{M}$): The set
    $\mathcal{M} = \{-1,0,1\}^2\setminus\{(0,0)\}$ comprises eight
    directions (N, NE, E, SE, S, SW, W, NW). Movement to an obstacle cell or
    outside $\Gc$ is inadmissible.

    \item \textbf{Collect} ($a_c$): Available when the robot occupies a
    collection point $g\in\mathcal{C}$. Executing $a_c$ draws
    $I_{\text{obs}}$ from the Bernoulli mixture above and adds it to the
    robot's carried payload $I_{\text{carry}}$.

    \item \textbf{Transmit} ($a_t$): Available when the robot occupies a
    transmission point $g\in\Rc$ and $I_{\text{carry}}>0$.
    Executing $a_t$ delivers the payload and resets $I_{\text{carry}}$ to
    zero.
\end{itemize}


A state $x\in\mathcal{X}$ is a tuple
\[
    x = \bigl(g_r,\, \mathbf{v},\, I_{\text{carry}}\bigr),
\]
where $g_r\in\Gc$ is the robot's current grid position,
$\mathbf{v}\in\{0,1\}^{|\mathcal{C}|}$ is a binary vector whose $i$-th entry
equals $1$ if collection point $c_i$ has not yet been visited, and
$I_{\text{carry}}\in\mathcal{I}$ is the payload currently held. The configuration 
$\Ec = (\mathcal{C},\Rc,\mathcal{O})$ is a fixed instance
parameter, not a dynamic state. The state space is therefore
\[
    \mathcal{X} = \Gc \times \{0,1\}^{|\mathcal{C}|} \times \mathcal{I}.
\]
The episode terminates when the robot is destroyed or it enters the
\textit{terminal state space}
\[
    \mathcal{X}_T = \Gc \times \{0\}^{|\mathcal{C}|} \times \{0\},
\]
which encodes the condition that all collection points have been visited and no
payload remains to be transmitted.

\subsection{Cost Structure}
\label{s:costs}

The one-step cost $c(x,u)$ is the sum of the component costs defined below.
Since the collection cost depends on the stochastic payload $I_{\text{obs}}$,
the transition risk mapping $\sigma^{(N)}$ of Section~\ref{sec:mini-batch} acts
on the next-state value function and implicitly aggregates this randomness.

\begin{itemize}

    \item \textbf{Collection cost.}
    Executing $a_c$ at a collection point incurs a cost proportional to the
    realized payload:
    \[
        C_{\text{collect}} = c_o + k_o\,I_{\text{obs}},
    \]
    where $c_o\ge 0$ is a fixed observation overhead and $k_o>0$ is the
    per-unit information loss rate. Since $I_{\text{obs}}$ is random,
    $C_{\text{collect}}$ is a random variable with expectation
    $c_o + k_o\bigl[p\,I_L + (1-p)\,I_H\bigr]$.

    \item \textbf{Movement cost.}
    Each movement step incurs a cost proportional to the carried payload, with
    two regimes. In the \emph{survival regime} used during training, the cost is
    weighted by the probability of surviving the step:
    \[
        C_{\text{move}}^{\text{surv}} = (1-\delta)\,\bigl(c_m + k_m\,I_{\text{carry}}\bigr),
    \]
    where $c_m\ge 0$ is a base transit cost and $k_m>0$ is the per-unit
    information loss rate. In the \emph{destruction regime} used for
    scenario-based evaluation, the full cost is incurred unconditionally,
    reflecting total loss upon destruction:
    \[
        C_{\text{move}}^{\text{destr}} = c_m + k_m\,I_{\text{carry}}.
    \]

    \item \textbf{Transmission cost.}
    Executing $a_t$ at a transmission point yields a negative cost (reward)
    equal to the delivered payload. Arriving at a transmission point without
    a payload incurs a penalty $C_{\text{none}}\ge 0$:
    \[
        C_{\text{transmit}} =
        \begin{cases}
            -I_{\text{carry}}, & \text{if } I_{\text{carry}} > 0,\\
            C_{\text{none}},   & \text{otherwise.}
        \end{cases}
    \]

\end{itemize}

\begin{remark}
\label{rem:killing}
{\rm
Destruction is modeled in the standard way, by adjoining an absorbing state
$\mathfrak{d}$ with $V(\mathfrak{d})=0$ and letting the kernel place the mass
$\delta$ on $\mathfrak{d}$ at every step. If $\sigma$ is the conditional
expectation \eqref{E-form}, this device is \emph{exactly} equivalent to
evaluating the surviving trajectory with the discount factor
$\alpha = 1-\delta$; this is the classical equivalence between killing and
discounting \cite{Puterman1994}. For a nonlinear transition risk mapping the
equivalence fails.  Absorbing the
destruction probability into a scalar discount factor is therefore an
approximation whose error grows with $\varkappa$ and $N$. For the value $N=2$
used throughout our experiments the discrepancy is modest, but it is not zero,
and it is one of the reasons why in Section~\ref{s:results} we evaluate the
learned policies under \emph{explicitly simulated} destruction events, rather
than only under the model on which they were trained.
}
\end{remark}

\subsection{Decomposed Training Methodology}
\label{s:training}

A monolithic DQN operating on the full action space $\mathcal{U}$ over $\Xc$
failed to learn a viable policy: joint exploration of spatial and task decisions within
various configurations
creates a  problem that standard experience replay does not
resolve within feasible training budgets. 

We therefore adopt a
\textit{decomposed} architecture separating high-level task decisions from
low-level path execution.

\paragraph{High-level decisions.}
At each decision epoch the agent selects a \emph{target}---either the nearest
unvisited collection point or the nearest transmission point---using a
feature-based risk-averse DDQN as described in Section~\ref{sec:ddqn}.

\paragraph{Low-level path execution.}
Given the target, the sequence of movement actions navigating the robot from
its current position to the target---while avoiding obstacles and respecting
grid boundaries---is computed by a classical shortest-path algorithm. This
sub-problem admits a known deterministic solution; delegating it to a classical
planner eliminates function-approximation error and concentrates the learning
budget on the genuinely uncertain high-level decisions.


The low-level planning sub-problem is solved by the Dijkstra
algorithm that finds a collision-free direction sequence from the current cell
to the target cell. It is preferred over learning-based alternatives because
obstacle avoidance on a finite, fully observed grid is a deterministic problem
for which graph search is exact, interpretable, and computationally negligible
relative to the learning component. Full details appear in  section \ref{s:Algorithm}.


The high-level decision is posed over the binary action space
\[
\mathcal{U}_H = \big\{\,\text{target the nearest } c\in\mathcal{C},\;\;
\text{target the nearest } t\in\Rc\,\big\},
\]
and the optimal Q-factors over $\mathcal{U}_H$ are approximated by the
risk-averse DDQN of Section~\ref{sec:ddqn} with the mini-batch
target~\eqref{ddqn-target}.

\paragraph{Heuristic baseline policy.}
As a benchmark we consider the threshold policy $\varPi(\gamma\,;\,\cdot)$ introduced in 
\cite{ruszczynski2025risk},
\[
\varPi(\gamma;x) =
\begin{cases}
    \text{move to the nearest transmission point,} &
    \hspace{-2em}\text{if } I_{\text{carry}} > 0 \text{ and }
    d_{\mathcal{C}} \ge \gamma(\mathcal{C},\Rc)\,
    \dfrac{d_{\Rc}}{I_{\text{carry}}},\\[6pt]
    \text{move to the nearest unvisited collection point,} & \text{otherwise,}
\end{cases}
\]
where $d_{\mathcal{C}}$ and $d_{\Rc}$ are the shortest distances from
the current position to the nearest unvisited collection point and to the
nearest transmission point, respectively, and $\gamma(\mathcal{C},\Rc)>0$
is an instance-dependent threshold. The policy is applicable to any connected
state space equipped with a suitable distance function.

Calibrating $\gamma$ requires policy iteration with line search, which is
computationally expensive and yields a parameter that is instance-specific
and non-transferable. We refer to \cite{ruszczynski2014erratum} for details. The heuristic policy  serves as a performance
baseline against which the learned DDQN policy is compared.

\paragraph{Feature representations.}
To assess the sensitivity of the DDQN approximation to the input encoding,
we evaluate two feature representations derived from the state $x$ and the configuration $\Ec$:

\textbf{1) Raw state and configuration encoding.}
The instance and state data are flattened into a single input vector comprising one-hot encoding
of the configuration $\Ec$, concatenated
with the state $x$.

\textbf{2) Engineered features.}
A compact feature vector is constructed from the following quantities,
identified as most informative for the high-level decision through analysis and
systematic experimentation:
\begin{tightitemize}
   \item Fraction of unvisited collection points, $\|\mathbf{v}\|_1/\|\mathcal{C}\|_1$;
    \item Mean and standard deviation of pairwise distances among unvisited
          collection points;
    \item Grid distance $d_{\mathcal{C}}$ and radial distance $r_{\mathcal{C}}$  to the nearest unvisited collection point;
    \item Grid distance $d_{\Rc}$ and radial distance $r_{\Rc}$ to the nearest transmission point;
    
    \item Carried payload $I_{\text{carry}}$;
    \item Mean and standard deviation of the distance from each unvisited
          collection point to its nearest transmission point.
\end{tightitemize}
The ``grid distance'' is understood as the length of the shortest path, with accounting of all obstacles. 

One of the important motives for selecting the engineered features was the
observation that the problem, viewed abstractly on the unbounded lattice
$\Zb^2$, possesses an obvious symmetry group $G$ defined by the lattice translations, rotations, and reflections.
 Consequently, the optimal value function
is invariant with respect to these operations,  and the optimal policy -- equivariant.

The raw encoding does not reflect this structure: two geometrically identical states 
are mapped to different input vectors, and the network has to learn the
invariance from the data, which inflates the sample requirement by a factor of
order $|G|$. Every quantity in the list above, by contrast, is a $G$-invariant
of the pair (state, configuration): cardinalities, distances, and the moments
of distance distributions are unchanged by isometries. The engineered
representation therefore factors the learning problem through the quotient
$\Xc/G$. Moreover, because the high-level action set $\Uc_H$ is defined
intrinsically, by \emph{the nearest unvisited collection point} and \emph{the nearest
transmission point}, rather than by grid directions, the induced policy is
automatically equivariant, and none of the features depends on the size of the
grid or on the number of collection points. The latter property is what makes
the transfer to instance sizes never seen in training, reported in
Section~\ref{s:results}, possible at all.

Results for both encodings are reported in Section~\ref{s:results};
algorithmic details of the full training procedure appear in the Algorithm
section.\vspace{1ex}

\begin{remark}
\label{rem:smdp}
{\rm 
Selecting a target rather than a primitive move makes the high-level process a
semi-Markov decision process, or, in the terminology of
\citet{sutton1999between}, a decision process over \emph{options}. A
macro-action initiated at $x$ terminates after a random number $k(x,u)$ of
primitive steps, and in the expected-value case the correct Bellman target
discounts by $\alpha^{k(x,u)}$ and accumulates the intermediate costs as
$\sum_{i=0}^{k-1}\alpha^{i}c_i$. In the risk-averse case the situation is more
delicate: time consistency requires the option-level mapping to be the
\emph{composition} of the $k$ one-step mappings, and, unless $\sigma$ is the
conditional expectation, the composition of $k$ copies of $\sigma$ is not
$\sigma$ applied to the accumulated cost. Two features of our model keep the
discrepancy small. The low-level path is deterministic given the target, so
that the only randomness inside a macro-action is the destruction event and the
payload draw at the terminal cell of the path; and the movement costs are
accumulated with the survival weighting of Section~\ref{s:costs}.
}

\end{remark}

\subsection{Algorithm}
\label{s:Algorithm}

The reinforcement learning framework consists of two major components:
(i) a path-finding module that computes the optimal navigation and control matrices in a grid-based environment (Algorithm \ref{alg:pathfinding}), and 
(ii) Hierarchical Double Deep Q-Network (Algorithm \ref{alg:hra-ddqn}).

\begin{algorithm}[H]
\caption{Path-Finding and Control Computation}
\label{alg:pathfinding}
\KwIn{Start point $g_0$, grid $\mathcal{G}$, obstacle set $\mathcal{O}$, move set $\mathcal{M}$, norm order $r$, maximum distance $d_{\max}$}
\KwOut{Distance matrix $D$, action (control) matrix $A$}

Initialize $D(x,y) \gets d_{\max}$ and $A(x,y) \gets \varnothing$ for all $(x,y) \in \mathcal{G}$\;
Set $D(g_0) \gets 0$\;
Initialize candidate set $Q \gets \mathcal{G} \setminus \mathcal{O}$\;

\While{$Q$ not empty}{
    Select $(x,y) = \arg\min_{(i,j) \in Q} D(i,j)$\;
    \If{$D(x,y) \ge d_{\max}$}{
        \textbf{terminate} (graph not connected)\;
    }
    Remove $(x,y)$ from $Q$\;

    \ForEach{$m \in \mathcal{M}$}{
        Compute $(x',y') = (x,y) + m$\;
        \If{$(x',y') \in Q$}{
            $\tilde{d} \gets D(x,y) + \lVert (x',y') - (x,y) \rVert_r$\;
            \If{$\tilde{d} < D(x',y')$}{
                $D(x',y') \gets \tilde{d}$\;
                $A(x',y') \gets m$\;
            }
        }
    }
}
\Return{$D$, $A$}
\end{algorithm}

\begin{algorithm}[t]
\caption{Hierarchical Risk-Averse DDQN with Mini-Batch Transition Risk}
\label{alg:hra-ddqn}
\KwIn{Mini-batch size $N$, replay buffer capacity, training batch size $K$,
      target update period $\tau$, risk weight $\varkappa\in[0,1]$, number of episodes $J_{\text{train}}$,
      discount $\alpha$, learning rate $\beta$, exploration schedule $\{\varepsilon_t\}$}
Initialize online network $Q_\theta$; set $\theta^-\leftarrow\theta$\;
Initialize replay buffer $\Dc\leftarrow\emptyset$\;

\For{$i=1,\dots,J_{\textup{train}}$}{
    Observe initial state $x_0$\;

    \For{each step $t=0,1,2,\dots$}{

        \textbf{Macro-action selection:} \\
        With prob.\ $\varepsilon_t$ select $u_t^{\mathrm{macro}}$ at random; \\
        otherwise $u_t^{\mathrm{macro}} \leftarrow \argmin_{u} Q_\theta(x_t,u)$\;

        \textbf{Micro-action execution:} \\
        Determine primitive action $u_t$ via shortest path policy (Algorithm \ref{alg:pathfinding}\ ) according to $u_t^{\mathrm{macro}}$ \;

        Execute $u_t$; observe cost $c_t$;
        draw $Y_t^{1:N}\sim\Pb(x_t,u_t)^N$\;

        Store $(x_t,u_t^{\mathrm{macro}},c_t,Y_t^{1:N})$ in $\Dc$\;

        Sample training batch $\Bc=\{(x_k,u_k^{\mathrm{macro}},c_k,Y_k^{1:N})\}_{k=1}^{K}$ from $\Dc$\;

        \For{$k=1,\dots,K$}{

            $u_{kj}^\star \leftarrow \argmin_{u'\in\Uc(Y_k^j)}Q_\theta(Y_k^j,u')$,
            \quad $j=1,\dots,N$\;

            $V_{kj}^{\mathrm{dq}} \leftarrow Q_{\theta^-}(Y_k^j,u_{kj}^\star)$,
            \quad $j=1,\dots,N$\;

            \textbf{Risk-averse target:}
            \[
            \Tc_k \leftarrow c_k + \alpha\bigl[
                (1-\varkappa)\tfrac{1}{N}\textstyle\sum_j V_{kj}^{\mathrm{dq}}
                +\varkappa\max_j V_{kj}^{\mathrm{dq}}\bigr]
            \]

        }

        $\Lc(\theta;\theta^-) \leftarrow \frac{1}{K}\sum_{k=1}^{K}
            \bigl[\Tc_k - Q_\theta(x_k,u_k^{\mathrm{macro}})\bigr]^2$\;
        $\theta \leftarrow \theta
            - \beta\,\nabla_\theta\Lc(\theta;\theta^-)$\;

    \If{$i \bmod \tau = 0$}{
        $\theta^-\leftarrow\theta$ \quad(hard update)
        \textbf{ or }\quad
        $\theta^-\leftarrow(1-\rho)\,\theta^-+\rho\,\theta$
        \quad(soft update)\;
    }
    }
}
\end{algorithm}

\section{Results}
\label{s:results}

\subsection{Comparison of Learning via Different Feature Sets}

Using the training methodology described in Section~\ref{s:training}, we simulate
$J_{\text{train}} = 8000$ episodes over randomly generated configurations to train the
neural network. Each episode is executed to completion without explicitly modeling
destructive events during training; instead, the probability of destruction is absorbed
into the discount factor, entering the learning objective in the standard way.
The environment parameters and neural-network hyperparameters are summarized in
Tables~\ref{tab:env_params}--\ref{tab:nn_params}.

\begin{table}[t]
\centering
\caption{Environment (configuration) parameters.}
\label{tab:env_params}
\begin{tabular}{lc}
\toprule
\textbf{Parameter} & \textbf{Value} \\
\midrule
Grid Size ($n \times n$) & $7 \times 7$ \\
Number of Collection Points & 12 \\
Number of Transmission Points & 2 \\
Number of Obstacles & 5 \\
Discount factor ($\alpha$) & 0.95 \\
Destruction probability ($\delta$) & 0.05 \\
\bottomrule
\end{tabular}
\end{table}

\begin{table}[t]
\centering
\caption{Training parameters for DDQN.}
\label{tab:training_params}
\begin{tabular}{lc}
\toprule
\textbf{Parameter} & \textbf{Value} \\
\midrule
Exploration Rate ($\epsilon$) & 0.3 \\
Number of Training Episodes & 8000 \\
Target Network Sync Frequency & 500 \\
Replay Memory Size & 6000 \\
Training Batch Size ($K$) & 800 \\
Risk Batch Size ($N$) & 2 \\
Risk Weight ($\varkappa$) & 1 \\
Validation Frequency & 50 \\
Learning Rate ($\beta$) & $1 \times 10^{-5}$ \\
\bottomrule
\end{tabular}
\end{table}

\begin{table}[t]
\centering
\caption{Neural network architecture.}
\label{tab:nn_params}
\begin{tabular}{lc}
\toprule
\textbf{Layer} & \textbf{Dimension} \\
\midrule
Input Layer ($l_1$) & $5n^2 + 1$ (raw) / $10$ (engineered) \\
Hidden Layer 1 ($l_2$) & 200 \\
Hidden Layer 2 ($l_3$) & 200 \\
Hidden Layer 3 ($l_4$) & 150 \\
Hidden Layer 4 ($l_5$) & 150 \\
Output Layer ($l_6$) & 2 \\
\bottomrule
\end{tabular}
\end{table}

To improve generalization, we investigate two feature-design strategies derived from the
underlying state space: (i)~raw features, consisting of the full state and configuration representation,
and (ii)~a curated set of task-relevant features selected to capture the most
informative aspects of the environment while eliminating redundancy.
Model performance while training is evaluated using training loss (to monitor convergence) and
cumulative reward on a held-out validation set of $J_{\text{val}} = 300$ independently
generated configurations, disjoint from the training data.

The results, summarized in Figure~\ref{fig:combined},
reveal a clear performance gap between the two representations.
The model trained on raw features achieves a lower \emph{training} loss but a
markedly worse reward on the validation set --- the classical signature of
overfitting to a high-dimensional and highly redundant encoding. The dimension
of the raw input, $5n^2+1$, grows quadratically with the linear size of the
grid, whereas the engineered input has a fixed dimension of $10$; and, as
observed in Section~\ref{s:training}, the raw encoding separates states that
belong to a single orbit of the symmetry group $G$, so that the network must
expend capacity on rediscovering an invariance that the engineered features
possess by construction.
The model trained on the curated features converges to a higher and visibly
more stable reward, confirming that the domain knowledge encoded in the feature
map reduces the effective complexity of the learning problem and improves
out-of-sample performance.
All subsequent experiments therefore adopt the curated feature representation.

\paragraph{Generalization Analysis}
We further evaluate the trained models on configurations outside the training
distribution by varying grid size, number of collection points, transmission nodes, 
and obstacles. The engineered-feature model consistently outperforms 
the raw-feature baseline across all settings, demonstrating strong scalability 
to problem instances with varying size and structural complexity (Tables~\ref{tab:configurations} 
and~\ref{tab:learning_comparison}).

Table~\ref{tab:configurations} summarizes the two evaluation environments. 
Configuration A corresponds to a smaller grid of size $7 \times 7$ with 
12 collection points, 2 transmission nodes, and 5 obstacles. Configuration B 
represents a more complex setting with a $12 \times 12$ grid, 14 collection points, 
3 transmission nodes, and 8 obstacles. Performance is evaluated over 
$J_{\text{test}} = 300$ randomly generated configurations that are disjoint 
from both training and validation sets.

\begin{figure}[h]
    \centering
    
    \begin{subfigure}{0.48\linewidth}
        \centering
        \includegraphics[width=\linewidth]{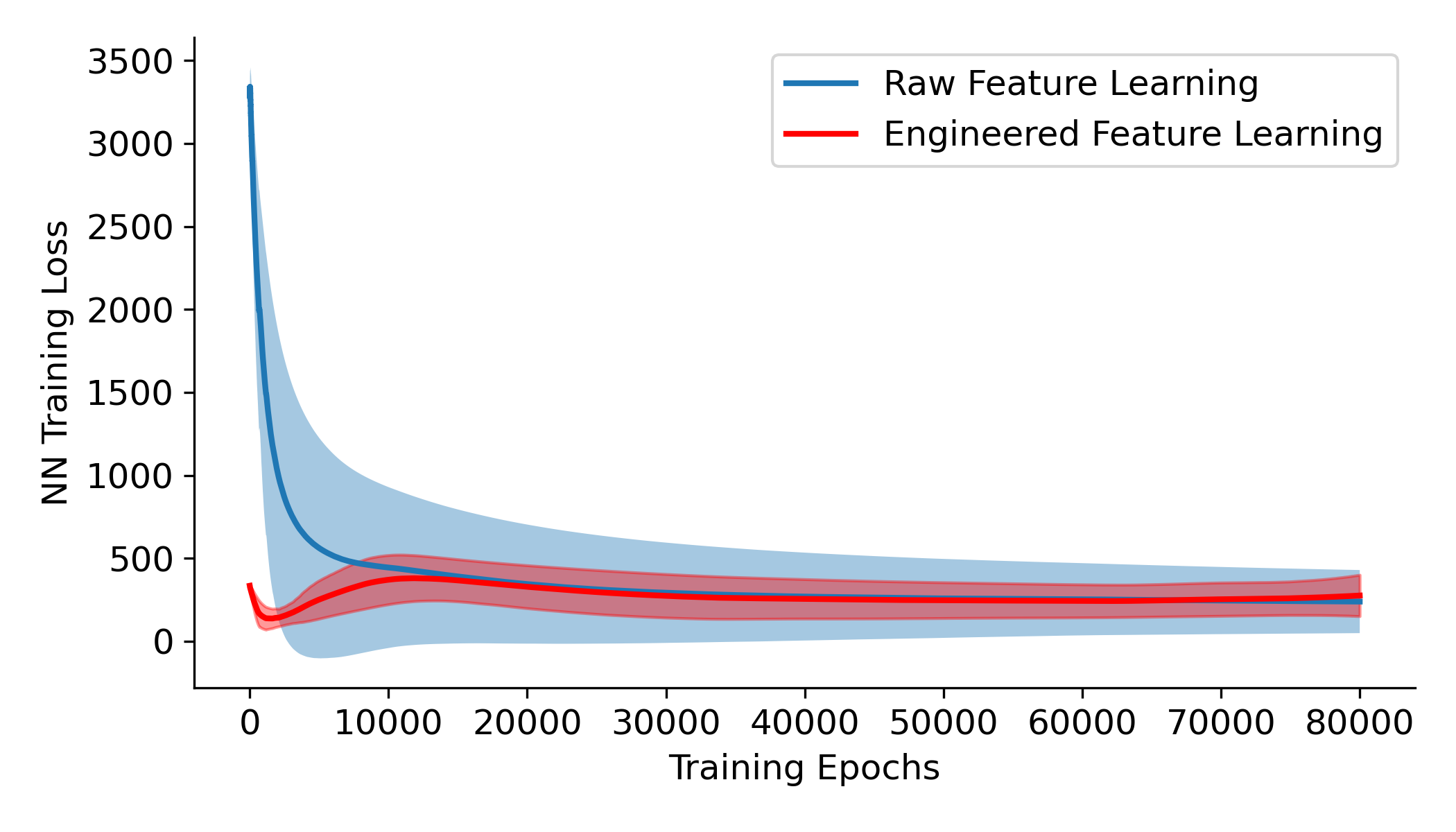}
        \caption{Training loss during learning}
        \label{fig:loss}
    \end{subfigure}
    \hfill
    \begin{subfigure}{0.48\linewidth}
        \centering
        \includegraphics[width=\linewidth]{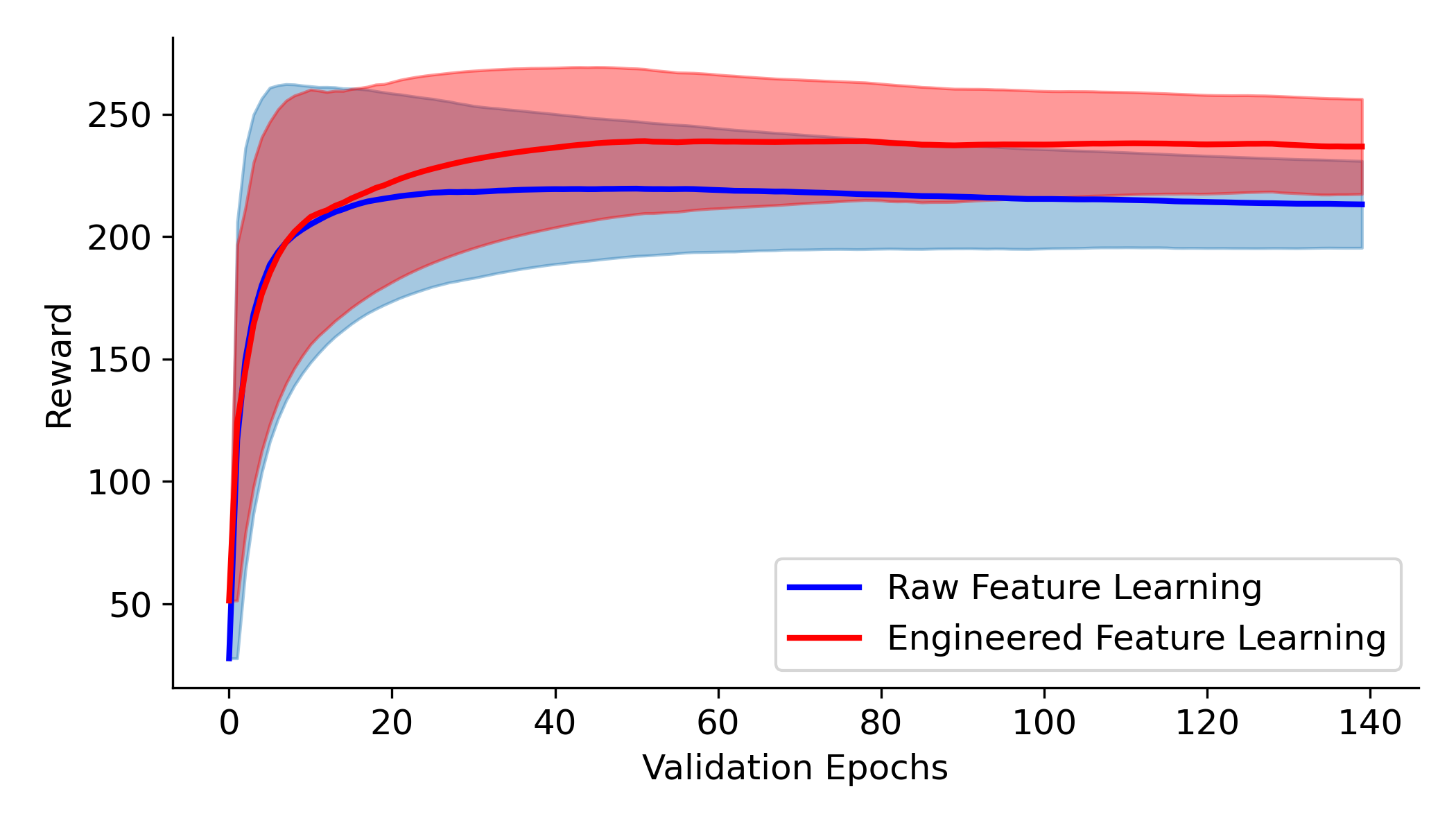}
        \caption{Cumulative reward learning during training}
        \label{fig:rewards}
    \end{subfigure}
    
    \caption{Comparison of optimization (Loss)  and performance (Reward) during training}
    \label{fig:combined}
\end{figure}

Here the {success ratio} is the fraction of test episodes in which the
robot reaches the terminal set $\Xc_T$, that is, visits every collection point
and delivers every payload.
As shown in Table~\ref{tab:learning_comparison}, the engineered-feature model
achieves higher mean rewards and attains a $100\%$ success ratio in both
configurations. The raw-feature model achieves a lower mean reward of $236.15$
and a success ratio of $95\%$ in configuration~A, and cannot be evaluated at all
on configuration~B: its input dimension $5n^2+1$ is tied to the grid size seen
in training, so the trained network does not even accept an instance of a
different size. This is not a quantitative deficiency but a structural one, and
it is precisely what the $G$-invariant, size-independent feature map of
Section~\ref{s:training} removes. The engineered representation transfers
without retraining from a $7\times 7$ grid with $12$ collection points to a
$12\times 12$ grid with $14$ collection points, retaining a $100\%$ success
ratio.

\begin{table}[t]
\centering
\caption{Environment configurations used for test experiments.}
\label{tab:configurations}
\begin{tabular}{lcc}
\toprule
\textbf{Configuration} & \textbf{A} & \textbf{B} \\
\midrule
Grid Size        & (7, 7)  & (12, 12) \\
Collection points   & 12      & 14 \\
Obstacles           & 5       & 8 \\
Transmission points & 2       & 3 \\
\bottomrule
\end{tabular}
\end{table}

\begin{table}[t]
\centering
\caption{Comparison of different feature learning methodologies under different configurations sampled on $J_{test} = 300$ in non crash setting}
\label{tab:learning_comparison}
\begin{tabular}{lcccc}
\toprule
\multirow{2}{*}{\textbf{Learning Methodology}} 
& \multicolumn{2}{c}{\textbf{Configuration A}} 
& \multicolumn{2}{c}{\textbf{Configuration B}} \\
\cmidrule(lr){2-3} \cmidrule(lr){4-5}
& Mean Reward & Success Ratio (\%) & Mean Reward & Success Ratio (\%) \\
\midrule
Raw Embedded & 236.15 & 95.33 & -- & -- \\
Engineered   & 259.56 & 100 & 184.74 & 100 \\
\bottomrule
\end{tabular}
\end{table}

\subsection{Risk-Averse Learning}

To model risk-averse behavior, we apply the mini-batch transition risk mapping
introduced in Section~\ref{sec:mini-batch}, which replaces the standard conditional
expectation in the Bellman operator with a coherent risk measure acting on
mini-batches of $N$ sampled transitions.
Setting $N = 1$ recovers the risk-neutral (expected-value) baseline, whereas
$N > 1$ induces increasing sensitivity to worst-case outcomes within each
mini-batch, interpolating between the risk-neutral policy and the worst-case
(robust) policy as $N \to \infty$.
We instantiate this framework with $N = 2$ and $\varkappa = 1$, and compare against two baselines:
(i)~the risk-neutral policy ($N = 1$), and (ii)~a highly conservative heuristic
policy parameterized by $\gamma = 2000$.

\begin{figure}[H]
    \centering

    \begin{subfigure}{0.45\linewidth}
        \centering
        \includegraphics[width=\linewidth]{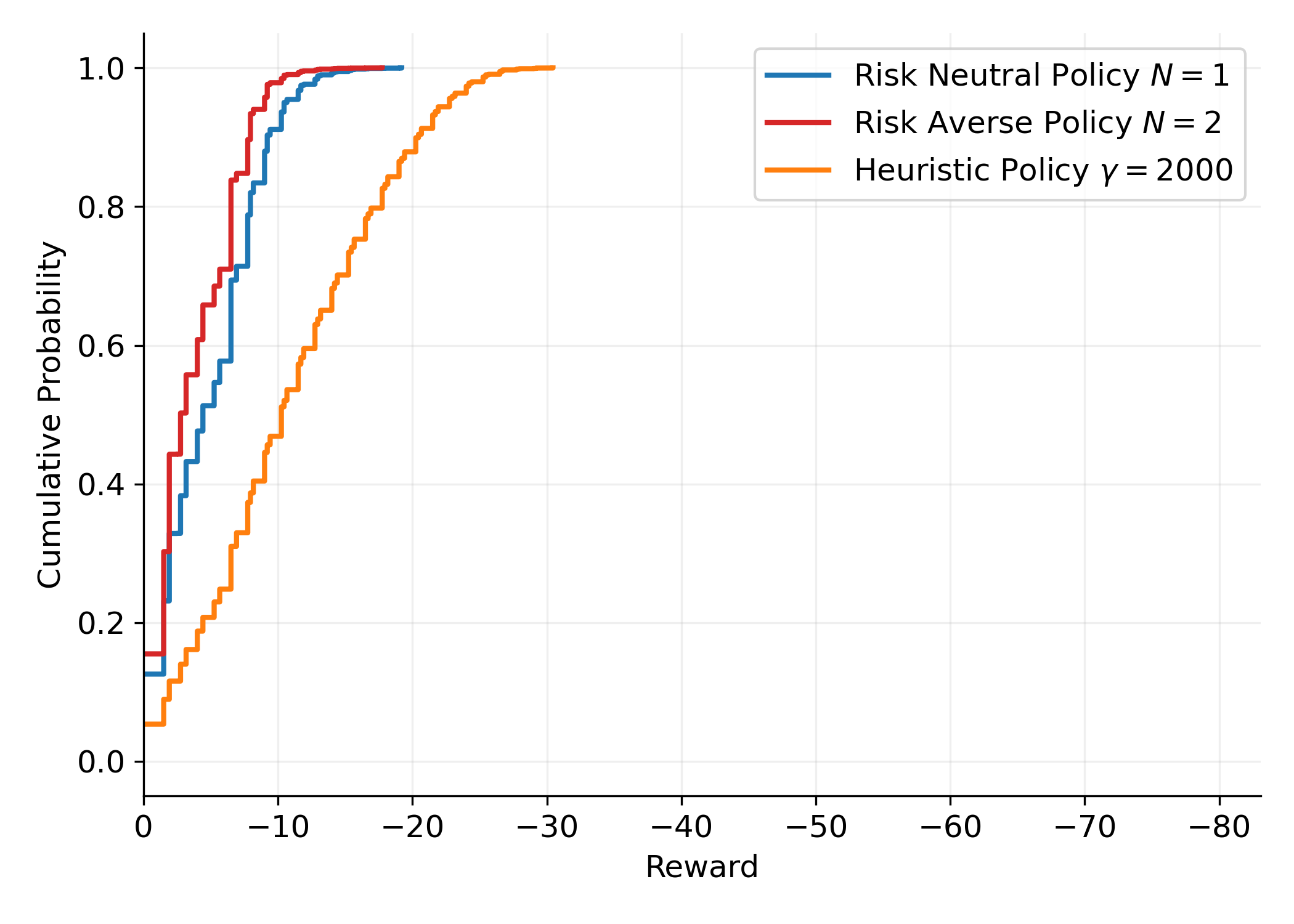}
        \caption{CDF plot of rewards (losses only) in non-crash}
        \label{fig:cdf_non_crash}
    \end{subfigure}
    \hfill
    \begin{subfigure}{0.45\linewidth}
        \centering
        \includegraphics[width=\linewidth]{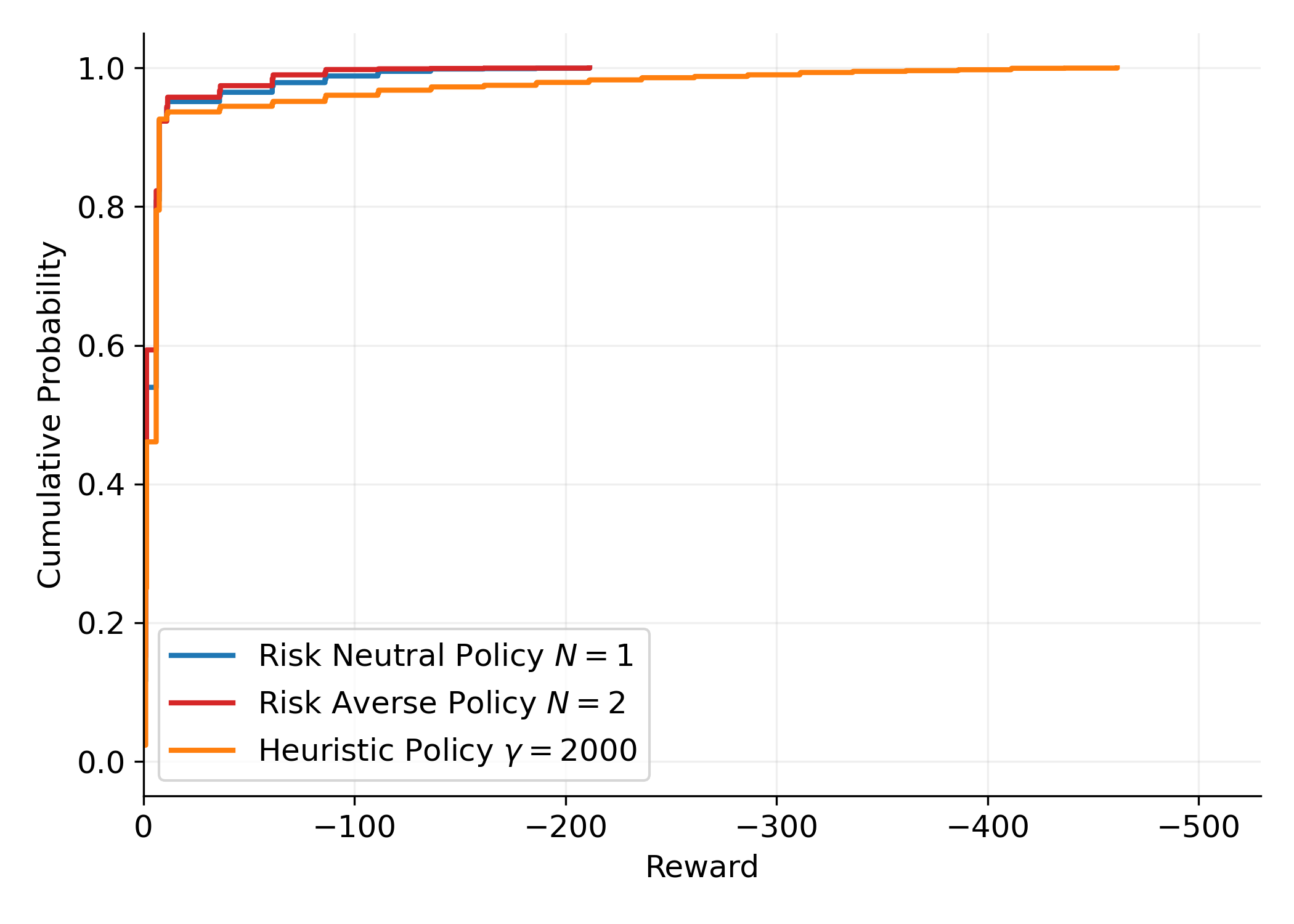}
        \caption{CDF plot with stochastic failure $p=0.05$}
        \label{fig:cdf_05}
    \end{subfigure}

    \vspace{0.5cm} 

    \begin{subfigure}{0.45\linewidth}
        \centering
        \includegraphics[width=\linewidth]{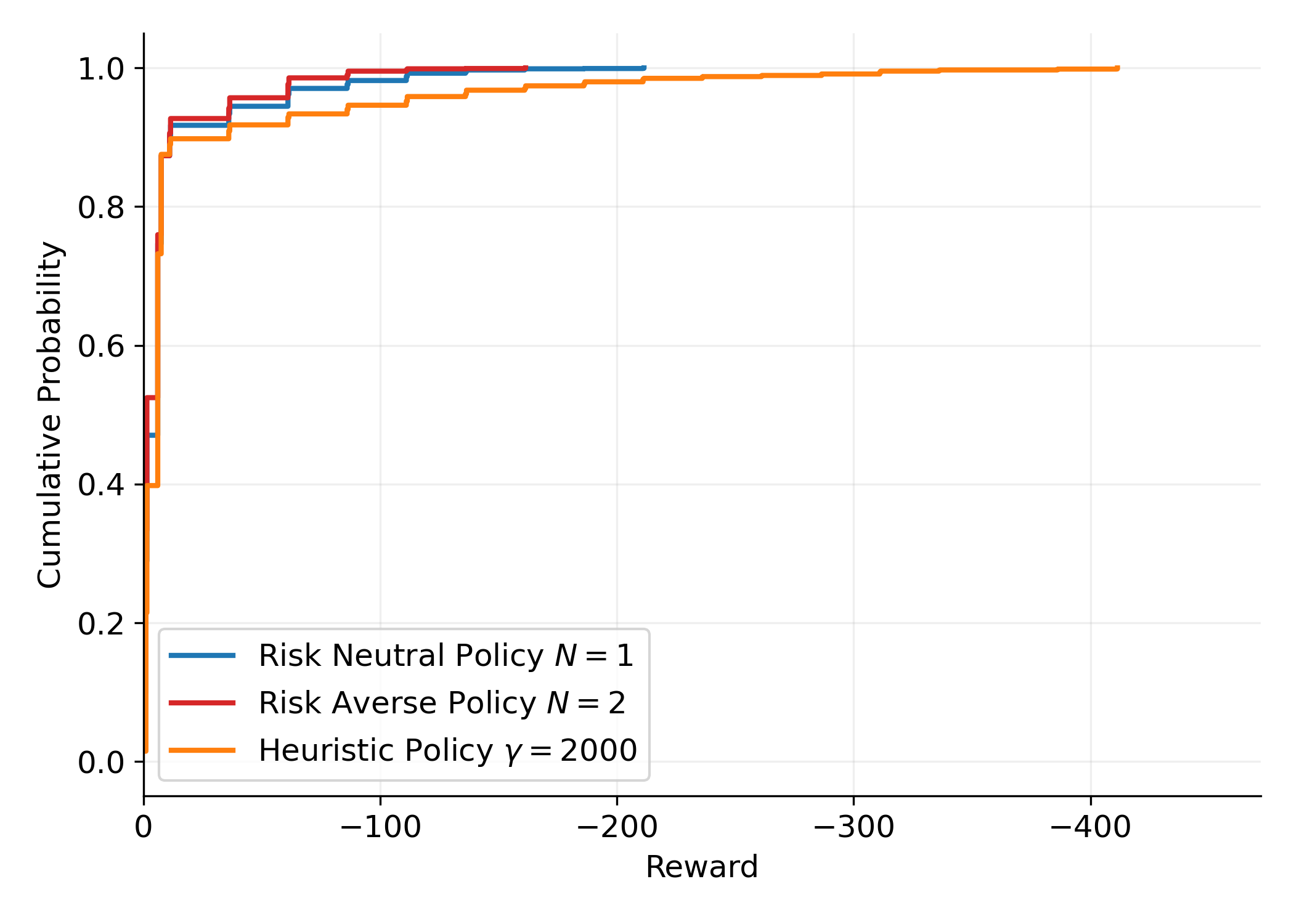}
        \caption{CDF plot with stochastic failure $p=0.10$}
        \label{fig:cdf_010}
    \end{subfigure}

    \caption{CDF comparison of different models in stochastic failure scenarios over 300 held-out environments}
    \label{fig:three_plots}
\end{figure}

\begin{table}[t]
\caption{Mean value and upper semideviation over 300 held-out environments:
$10\times10$ grid, 14~collection points, 3~transmitters, 8~obstacles.}
\label{tab:results-10x10}
\centering
\begin{tabular}{|c|cc|cc|cc|}
\hline
 & \multicolumn{2}{c|}{No crash}
 & \multicolumn{2}{c|}{Crash prob.\ $= 0.05$}
 & \multicolumn{2}{c|}{Crash prob.\ $= 0.10$} \\
\hline
Policy
& Mean Reward & $\sigma^{+}$
& Mean Reward & $\sigma^{+}$
& Mean Reward & $\sigma^{+}$ \\
\hline
$N = 1$
& $229.29$ & $742.44$
& $127.18$ & $211.12$
& $38.03$ & $122.26$ \\
\hline
$N = 2$
& $219.99$ & $719.13$
& $129.41$ & $186.22$
& $51.65$ & $111.70$ \\
\hline
$\gamma = 2000$
& $61.79$ & $805.35$
& $-66.66$ & $212.20$
& $-98.41$ & $118.58$ \\
\hline
\end{tabular}
\end{table}

Evaluation is conducted over 300 randomly generated environments with fixed seeds.
Each environment is defined on a $10 \times 10$ grid with 14~collection points,
3~transmission nodes, and 8~obstacles.
To assess robustness under stochastic failures, we introduce vehicle
destruction events with probabilities $\delta \in \{0.05,\, 0.10\}$, simulated
explicitly rather than absorbed into the discount factor
(cf.\ Remark~\ref{rem:killing}).
Performance is quantified by two statistics: the \emph{mean reward} and the
\emph{first-order upper semideviation} of the loss $L$,
\[
\sigma^{+}(L) = \Eb\big[\big(L - \Eb[L]\big)_{+}\big]
= \frac{1}{2}\Eb\big|L - \Eb[L]\big| ,
\]
which is used in risk-measure theory to capture the variability in the
unfavorable tail of the loss distribution~\cite{OgryczakRuszczynski1999}. It
is worth recalling from \eqref{msd-form} that $\Eb[L]+\kappa\,\sigma^+(L)$
is itself a coherent measure of risk for $\kappa\in[0,1]$, consistent with
second order dominance; $\sigma^+$ is therefore not an arbitrary dispersion
statistic, but the dispersion term of a legitimate mean--risk model, which is
what makes a comparison of policies in the $(\text{mean},\sigma^+)$ plane
meaningful.

Table~\ref{tab:results-10x10} demonstrates that the mini-batch policy with $N=2$ provides a particularly favorable risk--return profile across the different environment conditions. Under the nominal model, the $N=2$ policy achieves a mean reward of $219.99$, compared with $229.29$ for the risk-neutral policy ($N=1$), while simultaneously reducing the upper semideviation from $742.44$ to $719.13$. The reduction in mean performance is accompanied by a reduction in exposure to unfavorable outcomes attaining better mean--semideviation statistics for
$\varkappa > 0.4$. Under misspecification, the $N=2$ mini-batch policy exhibits a more favorable risk--return profile than the risk-neutral policy, improving both criteria simultaneously. At $\delta=0.05$, it increases the mean reward by $1.8\%$ while reducing $\sigma^{+}$ by $11.8\%$. At $\delta=0.10$, the advantage becomes more pronounced, with a $35.8\%$ improvement in mean reward accompanied by an $8.6\%$ reduction in $\sigma^{+}$.

This is not a paradox, and the mechanism is worth stating. Neither policy was
trained under explicitly simulated destruction; following
Remark~\ref{rem:killing}, the destruction probability was absorbed into the
discount factor, which is exact only for $N=1$. The crash scenarios are thus
perturbations of the training model, and the dual representation \eqref{dual}
says precisely that minimizing a coherent risk measure is minimizing a worst
case over an ambiguity set of transition kernels. The $N=2$ policy hedges
against a family of kernels which happens to contain the perturbed ones.  The mini-batch risk mapping supplies,
at the price of one additional next-state sample, the kind of protection that a
robust formulation would have to postulate explicitly.

The heuristic $\gamma$-policy, by contrast, is conservative without being
risk-averse in any principled sense. It degrades the mean severely,
without a corresponding reduction of the tail: its upper semideviation is the
largest of the three in the first two scenarios and essentially tied with the
risk-neutral one in the third. A single scalar threshold, calibrated by a line
search on one instance, cannot adapt to the risk--return structure of the
problem, whereas $\varkappa$ and $N$ act on the risk measure itself.
The empirical cumulative distribution functions in Figure~\ref{fig:three_plots}
corroborate these findings: relative to $N=1$, the $N=2$ policy shifts mass away
from the unfavorable tail in the crash scenarios, while the $\gamma$-policy
exhibits the heaviest unfavorable tail in the no-crash case.

\begin{figure}[h!]
    \centering
    \includegraphics[width=0.65\textwidth]{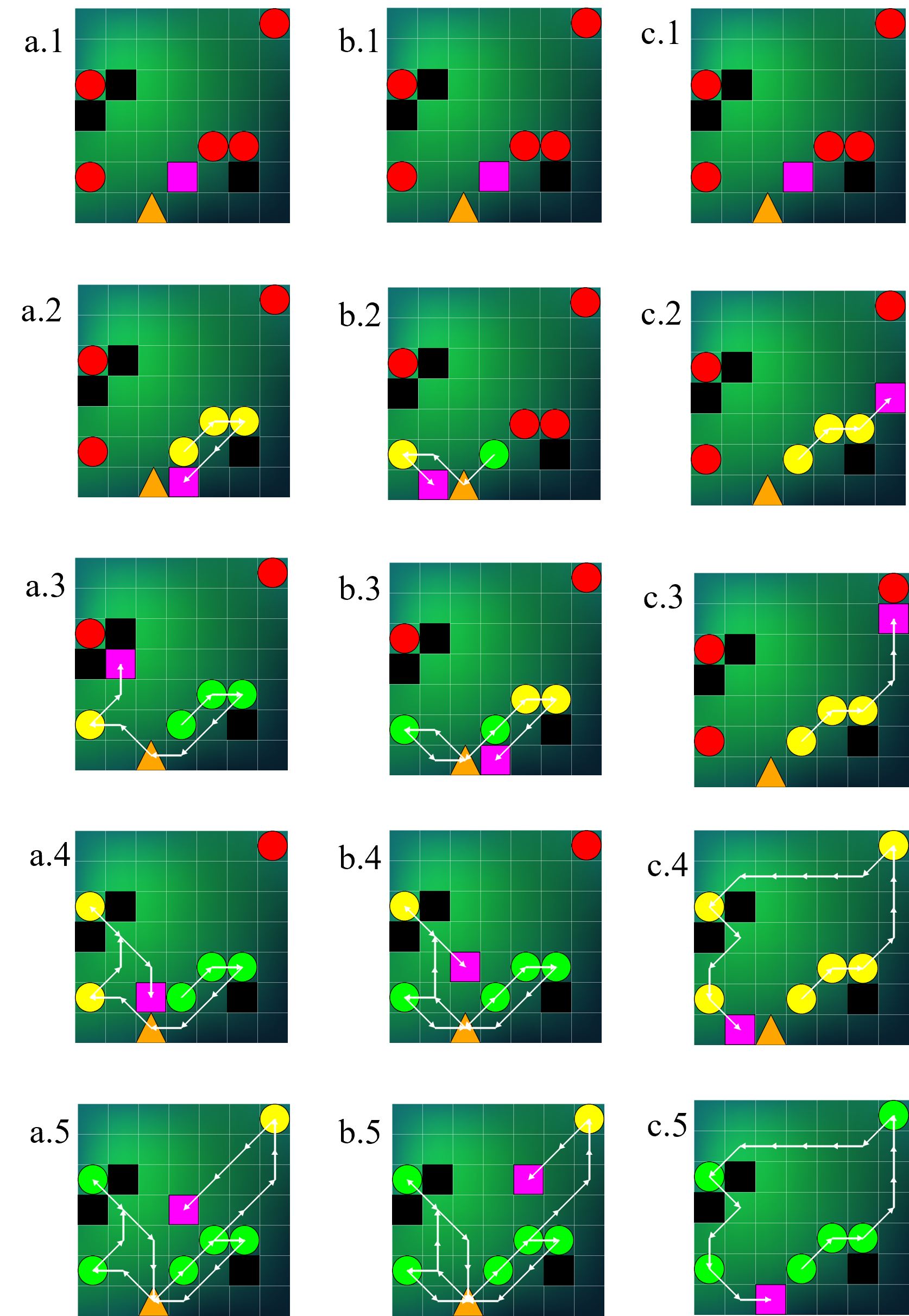}
    \caption{Comparison of trajectories under different trained policies a) Risk Neutral Policy ($N=1$) b) Risk Averse Policy ($N=2$) c) Heuristic $\gamma=2000$ Policy  }
    \label{fig:comaprison_trajectories}
\end{figure}

To illustrate the behavioral differences, Figure~\ref{fig:comaprison_trajectories}
presents representative trajectories for one fixed environment on a small
domain, in the absence of destruction events. Columns (a), (b), and (c)
correspond, respectively, to the risk-neutral policy ($N=1$), the risk-averse
policy ($N=2$), and the heuristic policy with $\gamma = 2000$, and the rows
show successive stages of the same episode. Moreover, the robot and the elements of the environment are described in Table~\ref{tab:environment_symbols}.

\begin{table}[ht]
    \centering
    \caption{Symbols used to represent the elements of the underwater robot environment.}
    \label{tab:environment_symbols}
    \renewcommand{\arraystretch}{1.4}
    \begin{tabular}{c l}
        \hline
        \textbf{Symbol} & \textbf{Description} \\
        \hline

        \textcolor{red}{\Large$\bullet$}
        & Unvisited collection point \\

        \textcolor{yellow}{\Large$\bullet$}
        & Visited collection point whose information has not been transmitted \\

        \textcolor{green}{\Large$\bullet$}
        & Collection point whose information has been successfully transmitted \\

        \textcolor{orange}{\Large$\blacktriangle$}
        & Transmission location \\

        \textcolor{black}{\Large$\blacksquare$}
        & Obstacle \\

        \textcolor{magenta}{\Large$\blacksquare$}
        & Robot \\

        \hline
    \end{tabular}
\end{table}

A comparison of the trajectories shows that the $N=2$ policy behaves more
conservatively: it avoids transporting a large accumulated payload along long
paths, and it transmits at the nearest available transmission point instead.
Under the cost structure of Section~\ref{s:costs}, the movement cost is
proportional to the carried payload, so a long journey with a full payload is
exactly the event that populates the unfavorable tail of the loss
distribution; the risk-averse policy shortens the exposure at the price of a
few additional visits to a transmission point.
The corresponding statistics, reported in
Table~\ref{tab:performance_comparison}, display the dominance:
the $N=2$ policy is giving $10\%$ more mean reward $136.008$ against $123.32$ ,
with the $4.5\%$ reduction of the upper semideviation $355.37$ against $339.18$, whereas the heuristic policy loses almost half of the mean
reward while \emph{increasing} the semideviation. These observations confirm
that the mini-batch risk mapping induces a principled and data-driven approach to the expected cost and the tail risk, rather than an undiscriminating
conservatism.

\begin{table}[h]
\centering
\caption{Reward and upper semideviation : 7 × 7 grid, 6 collection points,
1 transmitter, 3 obstacles.}
\label{tab:performance_comparison}
\begin{tabular}{lcc}
\toprule
\textbf{Policy} & \textbf{Reward} & \boldmath{$\sigma^{+}$} \\
\midrule
$N = 1$        & 123.324 & 355.37 \\
$N = 2$        & 136.008 & 339.18 \\
$\gamma = 2000$ & 62.910  & 375.41 \\
\bottomrule
\end{tabular}
\end{table}

\section{Conclusions}
\label{s:conclusions}

We have proposed a risk-averse deep Q-learning method for Markov decision
processes whose policies are evaluated by dynamic, time-consistent Markov risk
measures, and we have applied it to an underwater robot navigation problem of a
size that excludes tabular methods.

The construction rests on the mini-batch transition risk mappings of
\cite{dentcheva2023mini}. Their role here is to convert a nonlinear functional
of the transition kernel, which cannot be estimated from a single observed
transition, into an expected value of a function of $N$ next-state values,
which can. Proposition~\ref{prop:structure} shows that little is lost in the
process.  For $N=2$, mixed with the
conditional expectation, one obtains exactly the mean--Gini model. The batch
size is therefore a legitimate and interpretable dial of risk aversion, and the
smallest nontrivial choice $N=2$ already delivers a classical mean--risk model
at the cost of one additional next-state sample per transition.

On the algorithmic side, the combination with a double deep Q-network required
a separate analysis of two distinct biases. The maximum over the mini-batch is
not a source of bias at all: conditionally on the network parameters, it is an
exact evaluation of $\sigma^{(N)}$ at the current approximation, and this is
what makes the whole scheme implementable. The action-selection bias is the
familiar one and is removed by the double-network target; what remains is a
 convexity-induced bias of the opposite sign, which is
conservative and therefore benign in a risk-averse method.

Two devices proved decisive in the case study, and both are of a general
character. The first is the hierarchical decomposition, which delegates the
deterministic path-execution subproblem to an exact graph search and confines
the learning budget to the genuinely uncertain high-level decision. The second
is the invariant feature map: the model is invariant under the isometry group
of the grid, all the engineered features are invariants of that group, and none
of them depends on the size of the instance. This is what allows a single
trained network to be applied, without retraining, to instances larger than any
seen during training, while the raw encoding cannot even be evaluated on them.

The numerical results are the most interesting
outcome of this study. At the nominal model the risk-averse policy
loses a little to the risk-neutral one in the mean, and is slightly better in
the tail. Away from the nominal model, when destruction events are simulated
explicitly rather than absorbed into the discount factor, it dominates the
risk-neutral policy in \emph{both} criteria. This is the empirical shadow of the
dual representation \eqref{dual}: to minimize a coherent risk measure is to
minimize a worst case over a set of kernels, and that set is what protects the
policy against the perturbation. The heuristic threshold policy, by contrast,
is conservative without being risk averse, and pays a large price in the mean
without buying a reduction of the tail.

Several questions remain open. First, our method carries no convergence
guarantee. Convergence of risk-averse policy evaluation with a linear
architecture was established in \cite{kose2021risk,ruszczynski2025risk}, 
a reinforcement learning method with a linear architecture  was proposed and analyzed in \cite{ruszczynski2026reinforcement}, 
but
the extension to Q-learning with nonlinear approximators is open even in the
risk-neutral case, and the nonsmoothness of the mini-batch maximum adds a
difficulty of its own. Second, the hierarchical
scheme is formally a semi-Markov model, and the interaction of time consistency
with temporally extended actions (Remark~\ref{rem:smdp}) deserves separate
treatment: what is the right notion of a risk-averse option, and when is the
composition of one-step mappings along an option a mapping of the same type?
Third, we deliberately restricted attention to the smallest batch sizes, and a
systematic study of the trade-off among $N$, $\varkappa$, the statistical error,
and the computational cost would be valuable; Proposition~\ref{prop:structure}
suggests that $N$ and $\varkappa$ are not interchangeable, since they move the
spectrum in different ways. Finally, the destruction risk was absorbed into the
discount factor during training, which is exact only in the risk-neutral case
(Remark~\ref{rem:killing}); treating the killing event explicitly inside the
risk mapping is entirely feasible and, in view of our results under
misspecification, likely to be worthwhile.



\bibliographystyle{abbrv}


\end{document}